\newtheorem{theorem}{Theorem}
\newtheorem{proposition}{Proposition}
\newtheorem{lemma}[theorem]{Lemma}
\newtheorem{assumption}{Assumption}
\theoremstyle{definition}
\newtheorem{definition}{Definition}
\newtheorem{remark}{Remark}
\journal{Automatica}
\begin{document}

\begin{frontmatter}

\title{\LARGE Hybrid and Oriented Harmonic Potentials for Safe Task Execution\\
  in Unknown Environment}

\author{Shuaikang Wang and Meng Guo \fnref{pku}}
\address{Department of Mechanics and Engineering Science,\\
	College of Engineering, Peking University, Beijing 100871, China.}
\fntext[pku]{This work was supported by the National Natural Science Foundation
    of China (NSFC) under grants 62203017, T2121002, U2241214;
    and by the Fundamental Research Funds for the central universities.
    Contact: \texttt{wangs, meng.guo@pku.edu.cn}.}

\begin{abstract}
Harmonic potentials provide globally convergent potential fields
that are provably free of local minima.
Due to its analytical format, it is particularly suitable
for generating safe and reliable robot navigation policies.
However, for complex environments that consist of a large number
of overlapping non-sphere obstacles,
the computation of associated transformation functions can be tedious.
This becomes more apparent when: (i) the workspace is
initially unknown and the underlying potential fields are updated constantly
as the robot explores it;
(ii) the high-level mission consists of sequential navigation tasks among numerous regions,
requiring the robot to switch between different potentials.
Thus, this work proposes an efficient and automated scheme to construct
harmonic potentials incrementally online as guided by the task automaton.
A novel two-layer harmonic tree (HT) structure is introduced
that facilitates the hybrid combination of oriented search algorithms for task planning
and harmonic-based navigation controllers for non-holonomic robots.
Both layers are adapted efficiently and jointly during online execution to
reflect the actual feasibility and cost of navigation within the updated workspace.
Global safety and convergence are ensured both for the high-level task plan
and the low-level robot trajectory.
Known issues such as oscillation or long-detours for purely potential-based methods
and sharp-turns or high computation complexity for purely search-based methods are prevented.
Extensive numerical simulation and hardware experiments
are conducted against several strong baselines.
\end{abstract}

\end{frontmatter}

\section{Introduction}\label{sec:intro}
Autonomous robots can replace humans to operate and accomplish complex missions in hazardous environments.
However, it is a demanding engineering task to ensure both the safety and efficiency during execution,
especially when the environment is only partially known.
First, the control strategy
that drives the robot from an initial state to the goal state while staying within
the allowed workspace
(see e.g., \cite{karaman2011sampling,lavalle2006planning,koditschek1987exact,khatib1999mobile}),
should be reactive to the newly-discovered obstacles online.
Second, the planning method that decomposes and schedules
sub-tasks (see e.g., \cite{ghallab2004automated,fainekos2009temporal})
should be adaptive to the actual feasibility and cost of sub-tasks given the updated environment.
Existing work often ignores the close dependency of these two modules and treats them separately,
which can lead to inefficient or even unsafe executions,
as also motivated in~\cite{garrett2021integrated,kim2022representation}.
How to construct a fully integrated task and motion planning scheme with provable safety and efficiency
guarantee within unknown environments still remains challenging, see~\cite{loizou2022mobile, rousseas2022trajectory}.
\begin{figure*}[t]
  \centering
  \includegraphics[width=0.99\hsize]{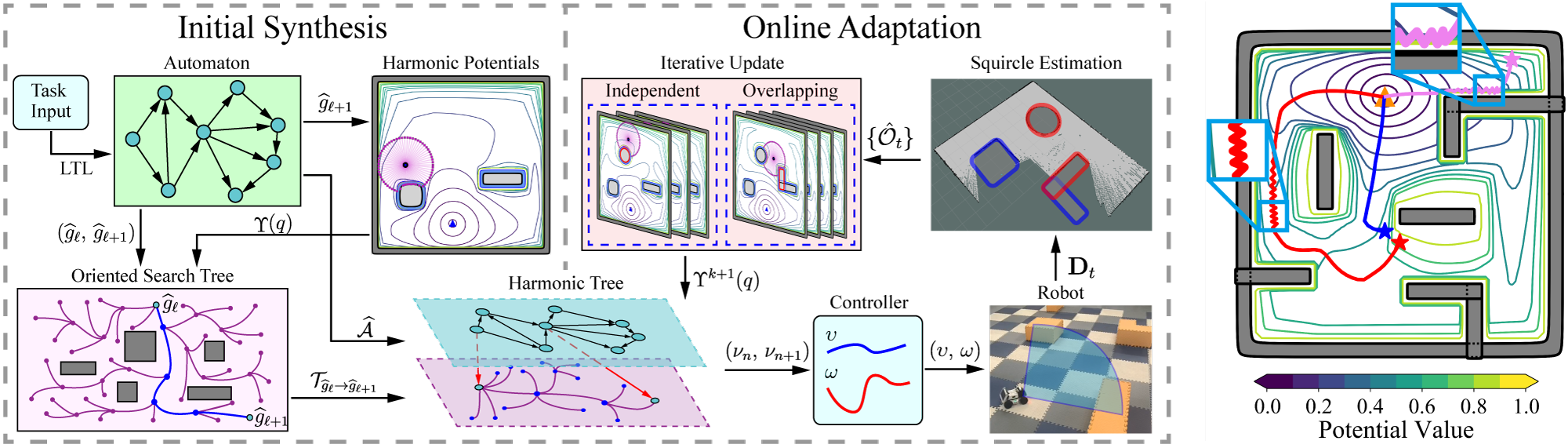}
  \vspace{-0.1in}
  \caption{\textbf{Left}: Illustration of the proposed framework,
    which consists of the initial synthesis, the online adaption of the task plan
    and harmonic potentials, and the squircle estimation.
  \textbf{Right}:
  Oscillations and long detours might occur via classic navigation functions
  as shown in the red, violet and blue trajectories.}
  \label{fig:diagram}
  \vspace{-0.05in}
\end{figure*}
\subsection{Related Work}\label{subsec:intro-related}
As the most relevant to this work, the method of artificial potential
fields from~\cite{khatib1986real, warren1989global, panagou2014motion, rousseas2022optimal} introduces
an intuitive yet powerful framework for tackling the safety and convergence property
during navigation.
The main idea is to introduce attractive potentials to the goal state
and repulsive potentials from obstacles and the workspace boundary.
However, naive design of these potentials would introduce undesired local minima,
where the combined forces are zero and thus prevents further progress.
Navigation functions (NF) pioneered by~\cite{koditschek1987exact}
provably guarantee that such minima are saddle points and more importantly of measure zero.
Although the underlying static workspace could be as general as \emph{forest of stars},
some key design parameters require fine-tuning for
the safety and convergence properties to hold.
The work in~\cite{fan2022robot} employs the conformal transformations to
map the multiply-connected workspaces to a sphere world without any tuning parameter,
which however requires a numerical solution of continuous integrals.
The work
in~\cite{huber2019avoidance,huber2024avoidance, dahlin2023creating} utilizes
reactive potentials to guide dynamic systems around obstacles.
Model predictive control has been adopted
in~\cite{dahlin2023obstacle, yu2015nonlinear, faulwasser2015nonlinear, sanchez2021nonlinear}
to track the derived potentials.
Moreover, harmonic potentials proposed in~\cite{kim1992real, loizou2011closed,
loizou2017navigation, vlantis2018robot}
alleviate such limitations by introducing a novel transformation scheme
from obstacle-cluttered environments to {point worlds},
while retaining these properties.
Furthermore, recent work in
~\cite{rousseas2021harmonic,
  rousseas2022trajectory,
  rousseas2022optimal} resolves the need
for a diffeomorphic mapping onto sphere disks,
by adopting a wider set of basis functions for workspace boundaries.
Nevertheless, such methods require solving numerous complex parametric optimizations,
instead of an analytic solution.
Lastly, despite of their global convergence guarantee,
there are several notable limitations as illustrated in Fig.~\ref{fig:diagram}:
(i) oscillations or jitters may appear especially when the trajectory slides along the
boundary of obstacles or crosses narrow passages;
(ii) drastically different trajectories may develop within the same potential field
when the initial pose is changed slightly;
(iii) the resulting trajectory is far from the optimal one in terms of
trajectory length or control efforts;
(iv) the final orientation at the goal pose can not be controlled freely.

Moreover, sampling-based search methods,
such as RRT$^\star$ in \cite{karaman2011sampling},  PRM in \cite{hsu2006probabilistic},
FMT$^\star$ in \cite{janson2015fast},
have become the dominant paradigm to tackle high-dimensional motion planning problems,
especially for systems under geometric and dynamic constraints.
However, one potential limiting factor is the high computational complexity
due to the collision checking process between sampled states
and the excessive sampling to reach convergence.
Since artificial potential fields are analytical
with theoretical guarantee,
it make sense to combine these two paradigms:
vector fields are used in~\cite{ko2013vf} to bias the branching of search trees,
thus improving the efficiency of sampling and reducing the number of iterations;
similar ideas are adopted in~\cite{qureshi2016potential,tahir2018potentially} as
the potential-based RRT$^\star$,
by designing directional samples as induced by the underlying potential fields.
However, these methods mostly focus on static environments for simple navigation tasks,
where the planning is performed offline and
neither the potential fields nor the search structure are adapted during execution.

When a robot is deployed in a partially-unknown environment,
an online approach is required such that the
underlying trajectory adapts to real-time measurements of the actual workspace
such as new obstacles.
For instances, a fully automated tuning mechanism for navigation functions
is presented in~\cite{filippidis2011adjustable},
while the notion of dynamic windows is proposed in~\cite{ogren2005convergent}
to handle dynamic environment.
Moreover, the harmonic potentials-based methods are developed further
in~\cite{rousseas2022trajectory} for unknown environments,
where the weights over harmonic basis are optimized online.
A similar formulation is adopted in~\cite{loizou2022mobile} where the parameters
in the harmonic potentials are adjusted online to ensure safety and global convergence.
{Furthermore, a semantic perceptual feedback me-thod is introduced in~\cite{vasilopoulos2022reactive} to
recognize the size of the obstacles from a pre-trained dataset.}
Lastly, the scenario of time-varying targets is analyzed in~\cite{li2018navigation}
by designing an attractive potential that evolves with time.
Similarly, dynamic environments are considered in~\cite{huber2022avoiding, dahlin2023creating}
by allowing time-varying and reactive potentials.
On the other hand, search-based methods are
also extended to unknown environments where various real-time revision techniques
are proposed in~\cite{otte2016rrtx, shen2021smarrt}.
However, less work can be found where the search tree and the underlying potentials
should be updated simultaneously and dependently.

Last but not least, the desired task for the robot could be more complex
than the point-to-point navigation.
Linear Temporal Logics (LTL) in \cite{baier2008principles} provide a formal language
to describe complex high-level tasks, such as sequential visit, surveillance and response.
Many recent papers can be found that combine robot motion planning
with model-checking-based task planning,
e.g., a single robot under LTL tasks~\cite{fainekos2009temporal, guo2018human, lindemann2021stl},
a multi-robot system under a global task~\cite{guo2015multi,luo2021abstraction,leahy2021scalable},
However, many aforementioned work assumes
an existing low-level navigation controller, or considers a simple and known
environment with circular and non-overlapping obstacles.
The synergy of complex temporal tasks and
harmonic potential fields within unknown environments has not been investigated.

\subsection{Our Method}\label{subsec:intro-our}
This work proposes an automated planning framework
term-ed that utilize harmonic potentials for navigation
and oriented search trees for planning, as illustrated in Fig.~\ref{fig:diagram}.
The design and construction of the search tree is specially tailored for the task automaton
and co-designed with the underlying navigation controllers based on harmonic potentials.
Intermediate waypoints are introduced between task regions to improve task efficiency
and smoothness of the robot trajectory.
Additionally,
a novel orientation-aware harmonic potential is proposed for nonholo-nomic robots,
based on which a nonlinear tracking controller is utilized to ensure safety.
Furthermore, during online execution, as the robot explores the environment gradually,
an efficient adaptation scheme is proposed to update the estimated obstacles, the search tree and the harmonic
potentials simultaneously, where intermediate variables are saved and re-used.
For validation, extensive simulations and hardware experiments are conducted for nontrivial tasks.

Main contribution of this work lies in the hybrid framework that combines
two powerful methods in control and planning,
for non-holonomic robots to accomplish complex tasks in unknown environments.
Specifically, it includes:
(i) the two-layer and automaton-guided Harmonic trees
that unify task planing and motion control;
(ii) a new ``purging'' method for forests of overlapping squircles,
which is tailored for the online case where obstacles are added gradually;
(iii) an integrated method to update the estimated obstacles,
  the Harmonic potentials
and the search trees simultaneously and recursively online.
It has been shown via both theoretical analyses and numerical studies that
it avoids the common problem of oscillation or long-detours for purely potential-based methods,
and sharp-turns or high computation complexity for purely search-based methods.

\subsection{Note for Practitioners}\label{subsec:practitioner}
To apply the proposed method to practical systems,
  the following steps are recommended given the framework in Fig.~\ref{fig:diagram}:
(i) the workspace model should be constructed w.r.t. the
specified task including the regions of interest and
their properties, as modeled in Sec.~\ref{subsec:ltl};
(ii) the abstraction method and relative distance
for the vertices within the harmonic tree
should be chosen according to the characteristics of the workspace, such as size and typical structure,
as described in Sec.~\ref{subsubsec:init-tree};
(iii) the nonlinear tracking controller
in Sec.~\ref{subsubsec:robot-control} is tuned for the specific
hardware platform such that it can track the gradient
of the oriented harmonic potentials
in Sec.~\ref{subsubsec:init-nf} with a desired accuracy;
(iv) the segmentation and clustering of the online data points
for the obstacle estimation
should be adjusted according the range and resolution of the
Lidar sensor, as mentioned in Sec.~\ref{subsubsec:init-nf};
(v) the update rate of the estimated obstacles
and the condition for replanning
should be tuned according to the estimated density of obstacles, as discussed in Sec.~\ref{subsec:online}.

\section{Preliminaries}\label{sec:preliminary}
\subsection{Diffeomorphic Transformation and Harmonic Potentials}\label{subsec:diff-transform}

A 2D \emph{sphere world}~$\mathcal{M}$ is defined as a compact and connected subset of~$\mathbb{R}^2$,
which has an outer boundary~$O_0=\{q \in \mathbb{R}^2: \|q-q_0\|^2-\rho_0^2 \leq 0\}$
centered at~$q_0$ with radius~$\rho_0$,
and inner boundaries of~$M$ disjoint sphere obstacles~$O_i =\{q \in \mathbb{R}^2: \|q-q_i\|^2-\rho_i^2 \leq 0 \}$
centered at~$q_i$ with radius~$\rho_i$, for $i=1,\cdots,M$.
There is a goal point denoted by~$q_G\in \mathcal{M}$.
By using a diffeomorphic transformation proposed in~\cite{loizou2017navigation},
this sphere world can be mapped to an unbounded \emph{point world}, as shown in Fig.~\ref{fig:diffeo-trans}.
It is denoted by {$\mathcal{P}=\mathbb{R}^2\backslash\{P_1,\cdots, P_M\}$},
which consists of $M$ point obstacles~$P_i\in \mathbb{R}^2$.
Specifically, the diffeomorphic transformation~$\Phi_{\mathcal{M}\rightarrow \mathcal{P}}(q)$
from sphere world to point world is constructed as follows:
\begin{equation}\label{eq:exact-tf-m-p}
  \begin{aligned}
    &\Phi_{\mathcal{M}\rightarrow \mathcal{P}}(q)\triangleq \psi \circ
    \Phi_{\mathcal{M} \rightarrow \Tilde{\mathcal{P}}}(q),\\
    &\Phi_{\mathcal{M}\rightarrow \Tilde{\mathcal{P}}}(q) \triangleq {\rm id}(q)
    +\sum_{i=1}^M \big(1-s_\delta(q,\,O_i)\big)(q_i-q),\\
    &\psi(\Tilde{q})  \triangleq \frac{\rho_0}{\rho_0-||\Tilde{q}-q_0||}(\Tilde{q}-q_0)+q_0,
    \end{aligned}
\end{equation}
where~$\Phi_{\mathcal{M}\rightarrow \Tilde{\mathcal{P}}}(q)$ transforms the sphere world~$\mathcal{M}$ to
a bounded point world {$\Tilde{\mathcal{P}}=O_0\backslash\{\Tilde{P}_1,\cdots, \Tilde{P}_M\}$},
of which~$\Tilde{P}_i$ is the inner point-shape obstacle with~$\Tilde{P}_i=\Phi_{\mathcal{M}\rightarrow \Tilde{\mathcal{P}}}(q_i)$,
for $i=1,\cdots,M$;
the summed element~$s_\delta(q,\,O_i)$ is the contraction-like transformation for obstacle~$O_i$,
which is composed by~$\eta_\delta(x) \circ \sigma(x) \circ b_i(x)$ as the
switch function, smoothing function and distance function, respectively.
{The exact definitions can be found in
\cite{loizou2017navigation} and the supplementary files}.
To obtain an infinite harmonic domain,
it is essential to map the bounded point world into the unbounded point world via
the diffeomorphic transformation~$\psi(\Tilde{q})$.
Given the point world~$\mathcal{P}$,
its associated~\emph{harmonic potential} function, is introduced
in~\cite{loizou2022mobile, loizou2021correct} and defined as follows.
\begin{figure}[t]
  \centering
  \includegraphics[width=0.95\hsize]{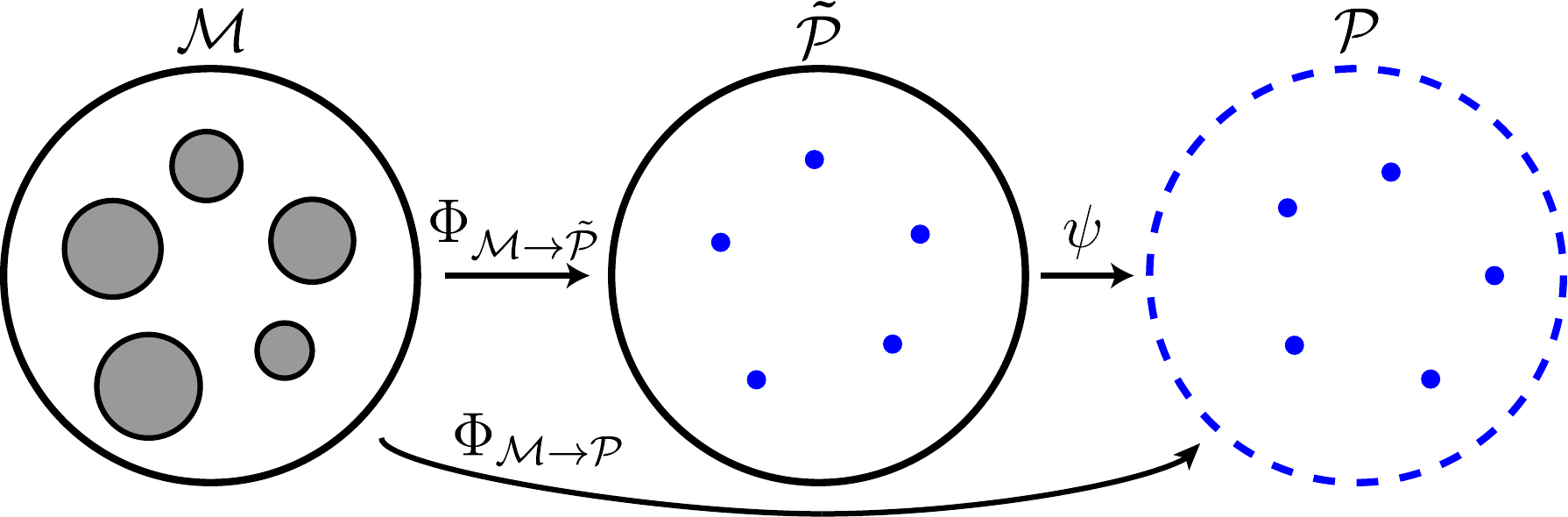}
  \vspace{-0.1in}
  \caption{Illustration of the diffeomorphic transformation from sphere word $\mathcal{M}$ to bounded point world $\Tilde{\mathcal{P}}$
  and to unbounded point world $\mathcal{P}$.}\label{fig:diffeo-trans}
  \vspace{-0.05in}
\end{figure}
\begin{definition} \label{def:harmonic-potential}
The harmonic potential function in a point world, denoted by~$\phi_{\mathcal{P}}:\mathcal{P}\rightarrow \mathbb{R}^+$, is defined as:
\begin{equation}\label{eq:harmonic-point-potential}
\phi_{\mathcal{P}}(x) \triangleq \phi(x,\, P_G) - \frac{1}{K} \sum_{i=1}^M \phi(x, P_i),
\end{equation}
where~$\phi(x,\,q) = \ln\left(\|x-q\|^2\right)$ is
the primitive harmonic function for $x,\,q\in \mathbb{R}^2$;
$\phi(x,\, P_G)$ is the potential for the transformed goal~$P_G=\Phi_{\mathcal{M}\rightarrow \mathcal{P}}(q_G)$,
whereas $\phi(x,\, P_i)$ for the obstacle~$P_i$, where~$i=1,\cdots, M$;
$K\geq 1$ is a tuning parameter. \hfill $\blacksquare$
\end{definition}
Lastly, the logistic function is used to transform the
unbounded range of~$\phi_{\mathcal{P}}$ to a finite interval~$[0,\,\mu]$ for~$\mu\geq 1$.


\subsection{Linear Temporal Logic and B\"uchi Automaton}\label{subsec:ltl}
The basic ingredients of Linear Temporal Logic (LTL) formulas are a set of atomic propositions $AP$, and several Boolean or temporal operators.
Atomic propositions are Boolean variables that can be either true or false.
The syntax of LTL is defined as:
$\varphi \triangleq \top \;|\; p  \;|\; \varphi_1 \wedge \varphi_2  \;|\; \neg \varphi  \;|\; \bigcirc \varphi  \;|\;  \varphi_1 \,\textsf{U}\, \varphi_2$,
where $\top\triangleq \texttt{True}$, $p \in AP$, $\bigcirc$ (\emph{next}),
$\textsf{U}$ (\emph{until}) and $\bot\triangleq \neg \top$.
The derivations of other operators, such as $\Box$ (\emph{always}),
$\Diamond$ (\emph{eventually}), $\Rightarrow$ (\emph{implication})
are omitted here for brevity.
A complete description of the semantics and syntax of LTL can be found in~\cite{baier2008principles}.
Moreover,
there exists a Nondeterministic B\"{u}chi Automaton (NBA) for formula~$\varphi$ as follows:
\begin{definition}\label{def:nba}
A NBA $\mathcal{A}\triangleq (S,\,\Sigma,\,\delta,\,(S_0,\,S_F))$
is a 4-tuple, where~$S$ are the states;
$\Sigma=AP$;
$\delta:S\times \Sigma\rightarrow2^{S}$ are transition relations;
$S_0, S_F\subseteq S$ are initial and {accepting} states. \hfill $\blacksquare$
\end{definition}
An infinite {word} $w$ over the alphabet $2^{AP}$ is defined as an
infinite sequence $W=\sigma_1\sigma_2\cdots, \sigma_i\in 2^{AP}$.
The language of $\varphi$ is defined as the set of words that satisfy $\varphi$,
namely, $\mathcal{L}=Words(\varphi)=\{W\,|\,W\models\varphi\}$ and $\models$ is the satisfaction relation.
Additionally, the resulting \emph{run} of~$w$ within~$\mathcal{A}$
is an infinite sequence~$\rho=s_0s_1s_2\cdots$
such that $s_0\in S_0$, and $s_i\in S$, $s_{i+1}\in\delta(s_i,\,\sigma_i)$ hold for all index~$i\geq 0$.
A run is called \emph{accepting} if it holds that
$\inf(\rho)\cap {S}_F \neq \emptyset$,
where $\inf(\rho)$ is the set of states that appear in $\rho$ infinitely often.
In general, an accepting run has the prefix-suffix structure
from an initial state to an accepting state that is contained in a cyclic path.
Typically, the size of~$\mathcal{A}$ is double exponential to the length of formula~$\varphi$.

\section{Problem Description}\label{sec:problem}
\begin{figure}[t]
  \centering
  \includegraphics[width=0.95\hsize]{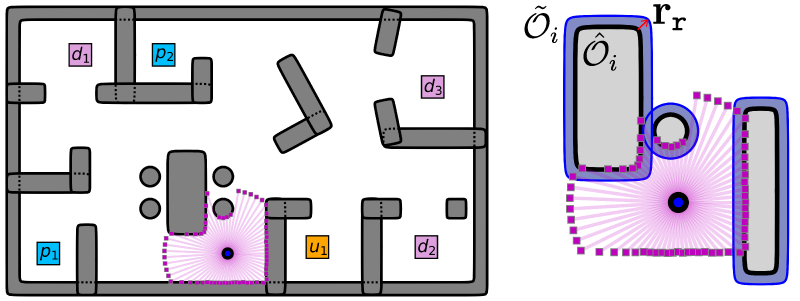}
  \vspace{-0.1in}
  \caption{\textbf{Left:} Robot in the workspace with overlapping squircles
  and several regions of interest.
  \textbf{Right:} Estimated (in black) and inflated (in blue) obstacles.
  }\label{fig:problem}
  \vspace{-0.05in}
\end{figure}
Consider a mobile robot that occupies a circular area with radius~$r_{\texttt{r}}>0$
and follows the unicycle dynamics:
\begin{equation}\label{eq:unicycle}
  \dot{x} = \upsilon \cos(\theta),\; \dot{y} = \upsilon \sin(\theta),\;\dot{\theta} = \omega,
\end{equation}
where~$q=(x,\,y)\in \mathcal{W}$ is the robot position
and $\theta\in [-\pi, \pi]$ as its orientation;
$(\upsilon,\,\omega)$ are its linear and angular velocities as control inputs.
The workspace~$\mathcal{W}_0\subset \mathbb{R}^2$ is compact and connected,
  with~$M$ potentially overlapping internal obstacles~$\mathcal{O}_i \subset \mathcal{W}_0$,
$\forall i \in \{1,\cdots,M\}$.
{Considering the robot size, the outer workspace $\mathcal{W}_0$ and inner obstacle $\mathcal{O}_i$
are inflated by a margin $r_{\texttt{r}}$,
denoted by $\Tilde{\mathcal{W}}_0$ and $\Tilde{\mathcal{O}}_i$.
Therefore, the feasible workspace is given
by~$\mathcal{W} \triangleq \Tilde{\mathcal{W}}_0 \backslash\bigcup_{i=1}^M \Tilde{\mathcal{O}}_i$}.

\begin{figure}[t]
  \centering
  \includegraphics[width=0.95\hsize]{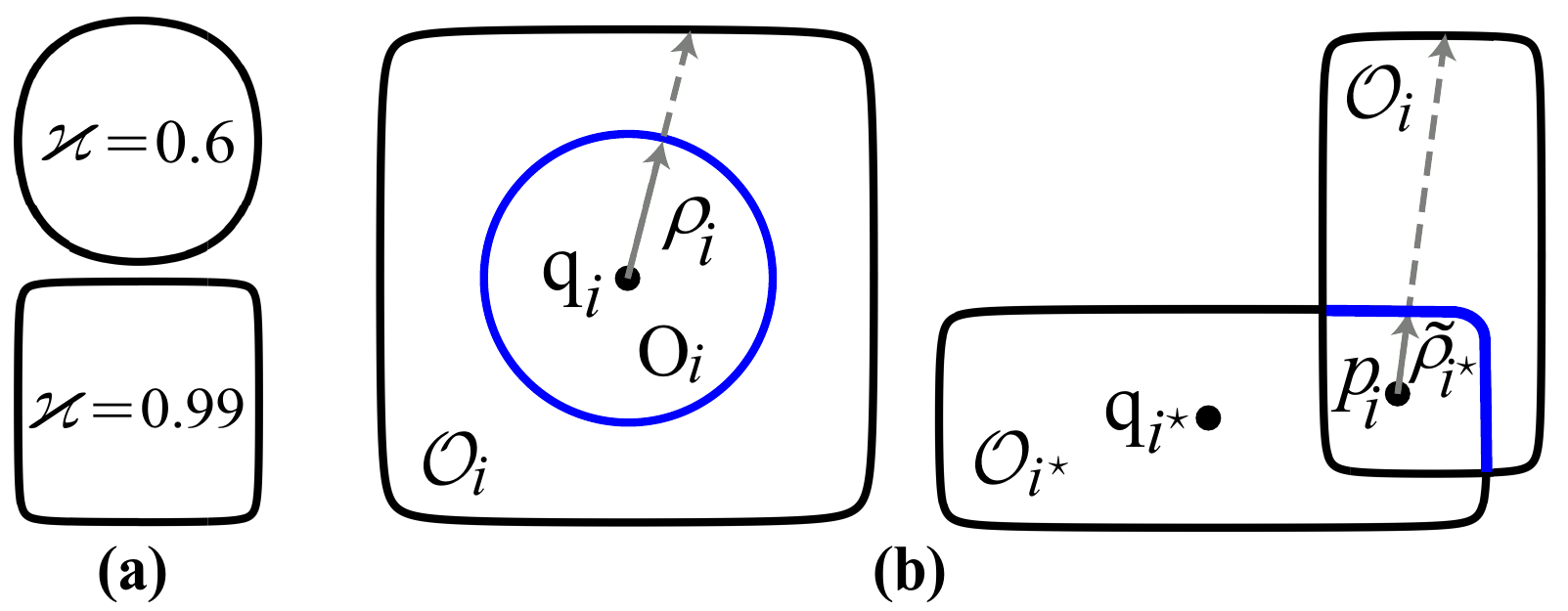}
  \vspace{-0.15in}
  \caption{(a) Squircles with the parameter $\varkappa=0.6$ and $\varkappa=0.99$;
  (b) Ray scaling process. The boundary of the star-shaped obstacle is mapped
  onto the boundary of a sphere (\textbf{Left}). The boundary of the child obstacle
  is mapped onto a segment of the boundary of the parent obstacle (\textbf{Right}).}\label{fig:scaling}
  \vspace{-0.05in}
\end{figure}

Each obstacle~$\mathcal{O}_i$ belongs to a type of obstacle called squircle,
which are particularly useful for representing walls and corners, see~\cite{li2018navigation}.
As shown in Fig.~\ref{fig:scaling}, a squircle interpolates smoothly between a circle and a square,
while avoiding non-differentiable corners.
In particular, a unit squircle centered at the origin in~$\mathbb{R}^2$ is given by:
\begin{equation}\label{eq:squircle}
  \begin{aligned}
    \beta_{\texttt{sc}}(q)\triangleq
    \frac{q^2+\sqrt{q^4-4\varkappa^2\,{[(q^\intercal \, e_1)(q^\intercal \, e_2)]}^2}}{2}-1,
\end{aligned}
\end{equation}
where $\varkappa\in (0,\,1)$ is the curvature;
$e_1,e_2$ are unit basis in~$\mathbb{R}^2$.
Non-unit and rotated squircles with general centers can be derived via
scaling, translation and rotation.

\begin{assumption}\label{assump:squircle}
  The workspace boundary and inner obstacles all
    follow the model of squircles in~\eqref{eq:squircle}.
  \hfill $\blacksquare$
\end{assumption}

Initially at~$t=0$, the workspace is only \emph{partially} known to the robot,
i.e., the outer boundary and some inner obstacles.
Starting from any valid initial state~$(q_0,\,\theta_0)$,
the robot can navigate within the workspace and observe more obstacles,
via a range-limited sensor modeled as follows:
\begin{equation}\label{eq:sense}
  \mathcal{S}(q) \triangleq \left\{\hat{q}\in \mathcal{W}
  \,|\, \left(\hat{q}\in \mathcal{D}_{r_{\texttt{s}}}(q)\right)
  \wedge \left(\mathcal{L}(q,\,\hat{q})\subset \mathcal{W}\right)\right\},
\end{equation}
where~$\mathcal{S}(q)$ is the set of points~$\hat{q}$
observed by the robot at position~$q\in \mathcal{W}$;
$\mathcal{D}_{r_{\texttt{s}}}(q)$ is a disk centered at~$q$ with
radius~$r_{\texttt{s}}$ and $\mathcal{L}(q,\,\hat{q})$ is the line
connecting~$q$ and~$\hat{q}$.
{As shown in Fig.~\ref{fig:problem}, it returns the 2D point cloud
from the robot to any blocking surface within the sensing range.}
This model mimics a $360^\circ$ Lidar scanner as also used
in~\cite{rousseas2022trajectory}.
Given the measurements, an observed obstacle
can be estimated accordingly.

\begin{assumption}\label{assump:accurate}
  An obstacle~$\mathcal{O}_i\in \mathcal{W}_0$ can be
    estimated accurately once~$S(q)$ in~\eqref{eq:sense}
  intersects with its occupied area.
  \hfill $\blacksquare$
\end{assumption}

Note that the implication and relaxation of
Assumption~\ref{assump:accurate}
are discussed in Sec.~\ref{subsubsec:init-nf}
and~\ref{subsubsec:online-squircle} in the sequel.
Lastly, there is a set of non-overlapping regions of
interest~$g_n\subset \mathcal{W}$, $n=1,\cdots,N$ within the freespace.
With slight abuse of notation, the associated atomic propositions are also
denoted by~$G=\{g_n\}$, standing
for ``the robot is within region $g_n$, i.e., $q\in g_n$''.
The desired task is specified as a LTL formula~$\varphi$ over~$G$,
i.e., $\varphi=LTL(G)$ by the syntax described in Sec.~\ref{subsec:ltl}.
Given the robot trajectory~$\mathbf{q}$, its trace is given by the sequence of
regions over time, i.e.,
$\omega(\mathbf{q})=g_{\ell_1}g_{\ell_2}\cdots$,
where~$g_{\ell_k}\in G$ and~$q(t_k)\in g_{\ell_k}$,
for some time instants~$0\leq t_k\leq t_{k+1}$ and $k\in \mathbb{Z}$.

Thus, the objective is to design an online control and planning strategy
for system~\eqref{eq:unicycle} such that
starting from an initial pose~$(q_0,\,\theta_0)$,
the trace of the resulting trajectory $\omega(\mathbf{q})$
fulfills the given task~$\varphi$,
while avoiding collision with all obstacles.



\section{Proposed Solution}\label{sec:solution}
As illustrated in Fig.~\ref{fig:diagram},
the proposed solution is a hybrid control framework
as two-layer harmonic trees (HT) that combines harmonic potentials for navigation
and oriented search trees for planning.
Initially, the automaton-guided search trees and the orientation-aware harmonic potentials
are constructed in a dependent manner given the partially-known workspace.
Then, as the robot explores more obstacles,
an online adaptation scheme is proposed to revise the search tree and
update the harmonic potentials recrusively and simultaneously.

\subsection{Initial Synthesis}\label{subsec:initialization}

\subsubsection{Two-layer and Automaton-guided Harmonic Trees}
\label{subsubsec:init-tree}
As described in Sec.~\ref{subsec:ltl},
the NBA associated with~$\varphi$ is given by~$\mathcal{A}_{\varphi}=(S,\,\Sigma,\,\delta,\,(S_0,\,S_F))$,
which captures all potential traces that satisfy the task.
To begin with, the initial \emph{navigation map} is constructed as a weighted and fully-connected
graph~$\mathcal{G}\triangleq (\widehat{G},\,E,\,d,\,(g_0,\theta_0))$,
where: (i) $\widehat{G}=G\times \Theta$ is the set of regions of interest
plus a set of orientations $\Theta\subset [0, 2\pi)$;
(ii) $E\subset  \widehat{G} \times \widehat{G}$ is the set of transitions;
(iii) $d: E \rightarrow \mathbb{R}^+$ is the cost function,
which is initialized as~$d((g,\theta),\,(g',\theta'))\triangleq \|g-g'\|_2+w|\theta-\theta'|$,
$\forall (g,\theta),\,(g',\theta')\in \widehat{G}$ and parameter~$w>0$;
and $(g_0,\theta_0)$ is the initial pose.
Note that $E$ is initialized as fully-connected since the actual feasibility
and cost can only be determined after the associated controllers are constructed.

Given~$\mathcal{G}$ and~$\mathcal{A}_{\varphi}$, the standard model-checking procedure
is followed to find the task plan.
Namely, their synchronized product is built
as~$\widehat{\mathcal{A}}\triangleq \mathcal{G}\times \mathcal{A}_{\varphi}=
(\widehat{S},\,\widehat{\delta},\,\widehat{d},\,(\widehat{S}_0,\widehat{S}_F))$,
where~$\widehat{S}=\widehat{G}\times S$;
$\widehat{S}_0,\widehat{S}_F\subset \widehat{S}$ are the sets of initial and accepting states;
$\widehat{\delta} \subset \widehat{S}\times \widehat{S}$
{that}~$(\langle g,s\rangle, \langle g', s'\rangle)\in \widehat{\delta}$
if~$(g,g')\in E$ and~$s'\in \delta(s,\{g\})$;
$\widehat{d}(\langle g,s\rangle, \langle g',s'\rangle)=d(g,g')$,
$\forall (\langle g,s\rangle, \langle g',s'\rangle)\in \widehat{\delta}$.
Note {that} the product~$\mathcal{\widehat{A}}$ is still a B\"uchi automaton,
of which the accepting run satisfies the prefix-suffix structure.
Namely, consider the following run of~$\widehat{\mathcal{A}}$:
$\widehat{\mathbf{S}}=\widehat{s}_1\widehat{s}_2\cdots \widehat{s}_L
  (\widehat{s}_{L+1}\widehat{s}_{L+2}\cdots$
$\widehat{s}_{L+H})^{\omega}$,
which an infinite sequence of~$\widehat{S}$
with~$\widehat{s}_1 \cdots \widehat{s}_L$ being the prefix
and~$\widehat{s}_{L+1}\cdots \widehat{s}_{L+H}$ being the suffix repeated infinitely often;
$\widehat{s}_1\in \widehat{S}_0$ and~$\widehat{s}_{L+1}\in \widehat{S}_F$; and~$L,H\geq 1$.
A nested Dijkstra algorithm is used to find the best pair of initial
and accepting states~$(\widehat{s}^\star_0, \widehat{s}^\star_F)$.
Thus, the optimal plan given the initial environment is obtained by
projecting~$\widehat{\mathbf{S}}$ onto~$\widehat{G}$, i.e.,
\begin{equation}\label{eq:plan}
  \widehat{\mathbf{g}}=\widehat{g}_1\widehat{g}_2\cdots \widehat{g}_L
  (\widehat{g}_{L+1}\widehat{g}_{L+2}\cdots \widehat{g}_{L+H})^{\omega},
\end{equation}
where~$\widehat{g}_{\ell}=\widehat{s}_\ell|_{\widehat{G}}$ are the regions.
More algorithmic details can be found in~\cite{guo2015multi}.

To avoid oscillation or jitters and long detours as mentioned in Sec.~\ref{subsec:intro-related},
the structure of oriented harmonic trees (HT) is proposed.
More specifically, the oriented HT associated with $(\widehat{g}_\ell,\,\widehat{g}_{\ell+1})$
is a tree structure defined by a 4-tuple:
\begin{equation}\label{eq:tree}
\mathcal{T}_{\widehat{g}_\ell \rightarrow \widehat{g}_{\ell+1}}\triangleq \big(V,\,B,\,\gamma,\,(\nu_0,\,\nu_G)\big),
\end{equation}
where~$V\subset \mathcal{W} \times (-\pi,\,\pi]$ is the set of vertices;
$B \subset V\times V$ is the set of edges;
$\gamma: B \rightarrow \mathbb{R}_{\geq 0}$ returns the edge cost to be estimated;
$\nu_0=\widehat{g}_\ell$ and $\nu_G=\widehat{g}_{\ell+1}$ are the initial and target poses.
The goal is to find a sequence of vertices in~$\mathcal{T}$
as the path from~$\nu_0$ to~$\nu_G$.
Initially, $V=\{\nu_0,\,\nu_G\}$ and $B=\emptyset$.
Then, the set of vertices can be generated in various ways, e.g.,
the visibility graph from~\cite{huang2004dynamic} or sampling-based methods from~\cite{lavalle2006planning}.
It is worth mentioning that these vertices should be augmented
by \emph{orientations} if not already.
Afterwards, any vertex is connected to all vertices within its \emph{free} vicinity
with an estimated cost.
Thus, given the weighted and directed tree~$\mathcal{T}_{\widehat{g}_\ell \rightarrow \widehat{g}_{\ell+1}}$,
the shortest path from~$\nu_0$ to~$\nu_G$
is determined by A$^\star$ from~\cite{lavalle2006planning}, denoted by:
\begin{equation}\label{eq:path}
  \mathbf{P}_{\widehat{g}_\ell \rightarrow \widehat{g}_{\ell+1}} \triangleq \nu_0\nu_1 \cdots \nu_{N-1}\nu_G,
\end{equation}
where~$\nu_n \in V$ and~$(\nu_n,\, \nu_{n+1})\in B$,
$\forall n\in [0,\,N-1]$.
In other words, each vertex along the path serves as the intermediate waypoints
to navigate from region~$\widehat{g}_\ell$ to $\widehat{g}_{\ell+1}$.

\subsubsection{Orientation-aware Harmonic Potentials}
\begin{figure}[t]
  \centering
  \includegraphics[width=0.98\hsize]{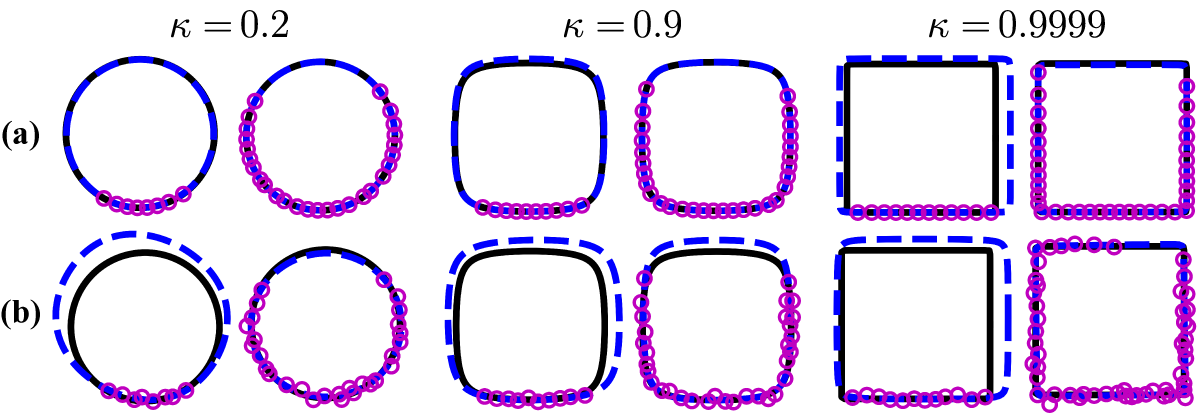}
  \vspace{-0.1in}
  \caption{Estimated squircles (in blue dashed lines)
      under different curvatures~$\kappa$ and
      different distributions of (a) accurate or (b)
      noisy measurements.}\label{fig:squircle_esti}
  \vspace{-0.05in}
\end{figure}
\label{subsubsec:init-nf}
As explained in~\eqref{eq:sense},
a 2D point cloud is returned that consists of points on any obstacle surface
within the sensing range.
More specifically, denoted by~$\mathbf{D}_t=\{d_j\}$ the
set of 2D points that are already
transformed from the local coordinate to global coordinate
where~$d_j\in \partial \mathcal{W}$.
To begin with, these data points are divided into~$K$ clusters
representing~$K$ separate obstacles, i.e.,
$\mathbf{D}_{t} = \{\mathbf{D}_{t,k}\}$,
by e.g., checking the relative distance
and change of curvature between consecutive points.
The exact thresholds would depend on the specification of the Lidar scanner.
Then, each cluster is fitted to the model of squircles in~\eqref{eq:squircle}
to estimate the curvature, translation, scaling and rotation,
via general nonlinear optimization solvers, e.g.,~\cite{gavin2019levenberg}.
As shown in Fig.~\ref{fig:squircle_esti}, the estimation accuracy relies heavily
on the actual parameters of the model in~\eqref{eq:squircle},
the noise level and the distribution of the measurements.
More specifically, the estimation is rather
accurate when the curvature~$\varkappa$
is close to~$0$ such as circles;
however becomes inaccurate or uncertain when when~$\varkappa$
is close to~$1$ such as rectangles and when the data
points are few or noisy.
In the later case, either a pre-stored database of common
obstacles and their shapes can be retrieved based on
object recognition
as adopted in~\cite{vasilopoulos2022reactive},
or an optimistic strategy that chooses the one
with the minimum area among the candidates.
For the rest of this section, it is assumed that the estimation
is accurate by Assumption~\ref{assump:accurate}.
Relaxation of this assumption via online adjustment
as more measurements are gathered
is discussed in Sec.~\ref{subsubsec:online-squircle}.

Consequently, denote by~$\{\hat{\mathcal{O}}_t\}$ the collection of obstacles
detected and fitted at time~$t\geq 0$.
An example is shown in Fig.~\ref{fig:problem}, where an initial workspace model
is constructed given the sensory data at~$t=0$.
Given the set of squircle obstacles~$\{\hat{\mathcal{O}}_0\}$,
the initial navigation function~$\varphi_{\texttt{NF}}(q)$ is constructed by \emph{three} major diffeomorphic transformations:
(i) $\Phi_{\mathcal{F}\rightarrow \mathcal{S}}$ transforms the forest of stars into a star world via ``{purging}'';
(ii) $\Phi_{\mathcal{S}\rightarrow \mathcal{M}}$ transforms the star world into its model sphere world;
and (iii) $\Phi_{\mathcal{M}\rightarrow \mathcal{P}}$ transforms the sphere world into a point world.
The complete navigation function for the original workspace is given by:
\begin{equation}\label{eq:complete-nf}
  \varphi_\texttt{NF}(q) = \sigma \circ \phi_{\mathcal{P}}
  \circ \Phi_{\mathcal{M}\rightarrow \mathcal{P}}
  \circ \Phi_{\mathcal{S}\rightarrow \mathcal{M}}
  \circ \Phi_{\mathcal{F}\rightarrow \mathcal{S}}(q),
\end{equation}
where~$\sigma \circ \phi_{\mathcal{P}}$ is defined in~\eqref{eq:harmonic-point-potential};
and the transformation~$\Phi_{\mathcal{M}\rightarrow \mathcal{P}}$ is given in~\eqref{eq:exact-tf-m-p}.
The remaining part of this section describes
the two essential and nontrivial
transformations~$\Phi_{\mathcal{S}\rightarrow \mathcal{M}}$ and~$\Phi_{\mathcal{F}\rightarrow \mathcal{S}}$.

\textbf{Star-to-Sphere Transformation}.
The star world~$\mathcal{S}$ has an outer boundary of
squircle workspace~$\mathcal{O}_0=\{q \in \mathbb{R}^2|\beta_0(q)\leq 0\}$
and~$M$ inner squircle obstacles~$\mathcal{O}_i=\{q \in \mathbb{R}^2|\beta_i(q)\leq 0\}$, where~$\beta_i(q)$ is the obstacle function defined in~\eqref{eq:squircle}, for~$i=0,1,\cdots, M$.
The star-to-sphere transformation is constructed by the ray scaling process from~\cite{rimon1990exact},
as shown in Fig.~\ref{fig:scaling}.
To begin with, define the following scaling factor:
\begin{equation}\label{eq:scaling}
v_0(q) \triangleq \rho_0\frac{1-\beta_0(q)}{||q-\mathbf{c}_0||},\quad
v_i(q) \triangleq \rho_i\frac{1+\beta_i(q)}{||q-\mathbf{c}_i||},
\end{equation}
where~$\mathbf{c}_i$ is the geometric center of the associated squircle
and~$\rho_i$ is the radius of the transformed sphere.
Moreover, the translated scaling map~$T_i$ for each obstacle~$\mathcal{O}_i$ is defined by:
\begin{equation}\label{eq:translated-scaling-map}
  T_i(q)\triangleq v_i(q)\,(q-\mathbf{c}_i)+\mathbf{c}_i,
\end{equation}
for~$i=0,1,\cdots,M$.
Consequently, the transformation from star world~$\mathcal{S}$ to its model sphere world~$\mathcal{M}$ is given by:
\begin{equation}\label{eq:tf-s-m}
  \Phi_{\mathcal{S}\rightarrow \mathcal{M}} \triangleq  \Big(1-\sum_{i=0}^M \sigma_i(q)\Big)\,{\rm id}(q)+\sum_{i=0}^M \sigma_i(q)\, T_i(q),
\end{equation}
where~$\sigma_i(q)\triangleq \frac{\gamma_G(q)\overline{\beta}_i(q)}{\gamma_G(q)\overline{\beta}_i(q)+ \lambda\beta_i(q)}$
is the analytic switch for the workspace boundary and obstacles;
${\rm id}(q)$ is the identity function;
$\lambda>0$ is a parameter;
$\gamma_G(q) \triangleq \|q-q_G\|^2$ is the distance-to-goal;
and~$\overline{\beta}_i(q)\triangleq \prod_{j=0,j\ne i}^M \beta_j(q)$
is the omitted product.

\textbf{Leaf-Purging Transformation}. As described in~\cite{loizou2022mobile, li2018navigation, rimon1990exact},
besides disjointed star worlds, it is essential to consider the workspace
formed by unions of \emph{overlapping} stars, called the forest of stars.
It consists of several disjointed clusters of obstacles as trees of stars,
which in turn is a finite union of overlapping star obstacles whose
adjacency graph is a tree.
Since overlapping stars can not be transformed directly as a whole obstacle,
each tree has to be transformed via successively \emph{purging} its leaves.
Specifically, a forest of stars is described
by {$\mathcal{F}=\mathcal{W}_0 \backslash \bigcup_{n=1}^N\mathcal{T}_n$},
where~$\mathcal{T}_n$ is the~$n$-th tree of stars among the~$N$ trees.
Each tree of stars has a unique root, and its
obstacles are arranged in a parent-child relationship.
An example is shown in Fig.~\ref{fig:forest},
where the trees have different depths according to the level of leaves in the tree.
Without loss of generality,
denote by~$d_n$ the depth of the tree~$\mathcal{T}_n$
and~$\mathcal{L}$ the set of indices for all leaf obstacles in all trees,
and by~$\mathcal{I}$ the set of indices associated with all obstacles within~$\mathcal{F}$.
Furthermore, denote by~$\mathcal{O}_i\triangleq\{q\in\mathbb{R}^2: \beta_i(q)\leq 0\}$ the leaf obstacles,
where~$\beta_i(q)$ is the obstacle function defined in~\eqref{eq:squircle}.
The \emph{parent} of obstacle~$\mathcal{O}_i$ is denoted by
$\mathcal{O}_{i^{\star}}$,
of which the centers are chosen to be the same
and denoted by~$p_i \in \mathcal{O}_i\bigcap \mathcal{O}_{i^{\star}}$.
Then, the diffeomorphic transformation from the forest of stars $\mathcal{F}$ to the star world~$S$
is constructed via the successive purging transformations for each tree as follows:
\begin{equation}\label{eq:tf-f-s}
  \Phi_{\mathcal{F}\rightarrow \mathcal{S}}(q) \triangleq \Phi_{N} \circ \cdots  \Phi_{2}\circ \Phi_{1}(q),
\end{equation}
where~$\Phi_{n}(q)$ is the purging transformation for the~$n$-th tree of stars,
where~$n=1,\cdots,N$.
It has the following format:
\begin{equation}\label{eq:purging-tf}
  \Phi_{n}(q) \triangleq f_{n,1} \circ f_{n,2} \cdots \circ f_{n,d_n}(q),
\end{equation}
where~$f_{n,i}(q)$ is the purging transformation for the~$i$-th leaf obstacle in the~$n$-th tree for~$i=1,\cdots,d_n$,
which is constructed by the ray-scaling process to purge the leaf
obstacle into its parent as shown in Fig~\ref{fig:scaling}.
Due to the specific representation of the squircle, the length of rays from the common center~$p_i$ to the boundary of the parent obstacle~$\mathcal{O}_{i^{\star}}$, is calculated by:
\begin{equation}\label{eq:sc_length_ray}
\Tilde{\rho}_{i^{\star}}(\hat{q}) \triangleq
    \frac{1}{\|\Tilde{\mathbf{A}}_{i^{\star}}^{-1}\hat{q}\|}
    \, \rho_{sc} \left(\frac{\Tilde{\mathbf{A}}_{i^{\star}}^{-1}\hat{q}}{\|\Tilde{\mathbf{A}}_{i^{\star}}^{-1}\hat{q}\|}\right),
\end{equation}
where $\hat{q}=\frac{q-p_i}{\|q-p_i\|}$ is the normalized vector of $q - p_i$;
$\rho_{sc}(\cdot)$ is the length of the rays for unit squircle drived from~\eqref{eq:squircle};
and $\Tilde{\mathbf{A}}_{i^{\star}}\in \mathbb{R}^{2 \times 2}$ is a piecewise scaling matrix
tailored for the parent squircle which is diagonal with the following entries:
\begin{equation}\label{eq: scaling-matrix-entries}
  \Tilde{a}_{i^{\star}}^{\ell}(\hat{q}) \triangleq
    a_{i^{\star}}^{\ell} + \mathbf{sgn}(\hat{q}^\intercal e_{\ell}) [\mathbf{c}_{i^{\star}} - p_i]^\intercal e_{\ell},
\end{equation}
for~$\ell=1,2$, where~$a_{i^{\star}}^{\ell}$ are the entries of the diagonal scaling matrix~$\mathbf{A}_{i^{\star}}$ of the parent squircle;
$\mathbf{c}_{i^{\star}}$ is the geometric center of the parent obstacle;
and~$e_{\ell}$ are two base vectors for~$\ell=1,2$.
\begin{figure}[t]
  \centering
  \includegraphics[width=0.9\hsize]{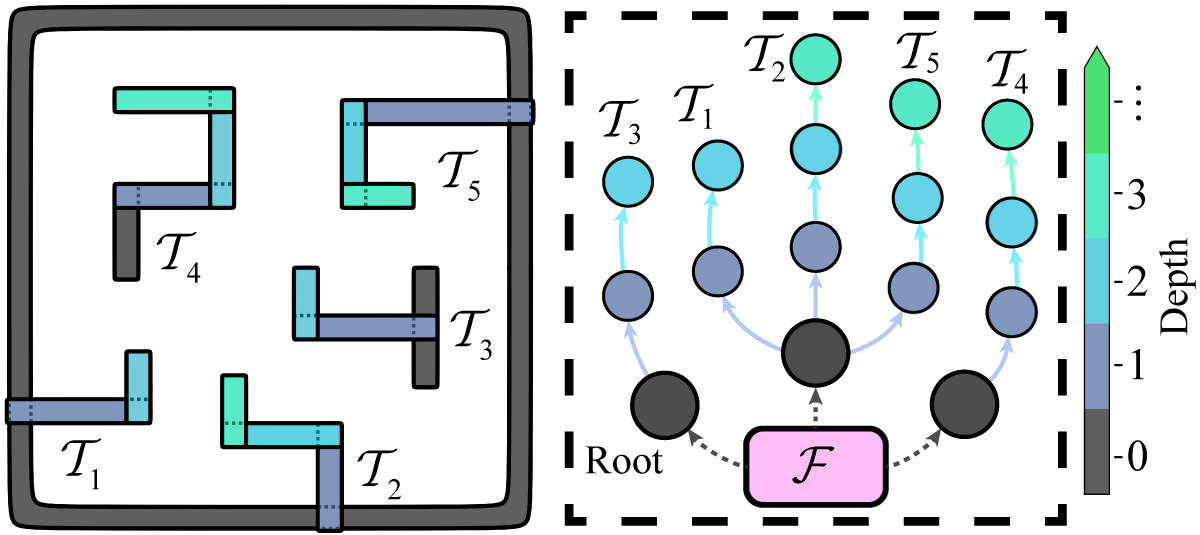}
  \vspace{-0.1in}
  \caption{{A forest world~$\mathcal{F}$ with overlapping squircles (\textbf{Left}),
    which consists of~$5$ trees of stars~$\mathcal{T}_i$ with different depths
    from the root to the leaves (\textbf{Right}).}}
  \label{fig:forest}
  \vspace{-0.05in}
\end{figure}

\begin{lemma}\label{lemma:smooth_sc_length_ray}
  The length of rays~$\Tilde{\rho}_{i^{\star}}(q)$ is smooth in {$\mathcal{W}_0 \backslash \mathcal{O}_i\bigcup \mathcal{O}_{i^{\star}}$}.
\end{lemma}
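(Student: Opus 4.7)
The plan is to decompose $\tilde{\rho}_{i^\star}$ into elementary pieces, verify smoothness of each piece on its domain, and then handle the potentially problematic gluing across the coordinate axes where the scaling matrix changes sign.

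First, on the set $\mathcal{W}_0 \setminus (\mathcal{O}_i \cup \mathcal{O}_{i^\star})$ any point $q$ satisfies $q \neq p_i$, since $p_i \in \mathcal{O}_i \cap \mathcal{O}_{i^\star}$ is excluded. Hence the normalization $q \mapsto \hat{q} = (q-p_i)/\|q-p_i\|$ is $C^\infty$ on this set, and it suffices to verify that $\hat q \mapsto \tilde{\rho}_{i^\star}(\hat{q})$ is smooth on the unit circle.

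Second, substituting the closed form of $\rho_{\rm sc}$ derived from \eqref{eq:squircle} (by solving $\beta_{\rm sc}(ru)=0$ for $r$ on a unit direction $u$) into \eqref{eq:sc_length_ray} and simplifying yields the equivalent representation
\begin{equation*}
\tilde{\rho}_{i^\star}^{\,2}(\hat q) \;=\; \frac{2}{\|\tilde{\mathbf A}_{i^\star}^{-1}\hat q\|^2 \;+\; \sqrt{\|\tilde{\mathbf A}_{i^\star}^{-1}\hat q\|^4 \;-\; 4\varkappa^2 \hat q_1^2 \hat q_2^2 \big/ \bigl[\tilde a_{i^\star}^{1}\,\tilde a_{i^\star}^{2}\bigr]^2}}.
\end{equation*}
The outer rational--radical function of the two scalar quantities $X=\|\tilde{\mathbf A}_{i^\star}^{-1}\hat q\|^2$ and $Y=\hat q_1^2\hat q_2^2/[\tilde a_{i^\star}^{1}\tilde a_{i^\star}^{2}]^2$ is $C^\infty$ provided $X^2 \geq 4\varkappa^2 Y / 4$; this holds because of the AM--GM estimate $\bigl(\hat q_1^2/(\tilde a^{1})^2 + \hat q_2^2/(\tilde a^{2})^2\bigr)^2 \geq 4\hat q_1^2 \hat q_2^2 / [\tilde a^{1}\tilde a^{2}]^2$ combined with $\varkappa^2 < 1$.

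The main obstacle is therefore the sign-dependent definition of the entries $\tilde a_{i^\star}^{\ell} = a_{i^\star}^{\ell} + \mathbf{sgn}(\hat q^\intercal e_\ell)[q_{i^\star}-p_i]^\intercal e_\ell$, under which the matrix $\tilde{\mathbf A}_{i^\star}$ is discontinuous across each axis line $\hat q^\intercal e_\ell = 0$. To handle this I would exploit that, in the definitions of $X$ and $Y$, each factor containing $\tilde a_{i^\star}^{\ell}$ is multiplied by $\hat q_\ell^2$, which vanishes along that very axis. Writing the two one-sided branches explicitly as $\hat q_\ell^2/(a_{i^\star}^{\ell} \pm c_\ell)^2$ with $c_\ell = [q_{i^\star}-p_i]^\intercal e_\ell$ and performing a Taylor expansion of each branch in the angular parameter that traces out the unit circle at the crossing, one verifies that the two sides together with their derivatives match across the axis; hence both $X$ and $Y$ extend smoothly through the axes as functions of $\hat q$.

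Combining the three steps, $\tilde{\rho}_{i^\star}$ is realized as a smooth composition on the stated domain, proving the lemma. I expect this last Taylor-matching to be the delicate part, precisely because the jump in $\tilde a_{i^\star}^{\ell}$ is nonzero whenever the parent center is offset from $p_i$ along $e_\ell$; indeed it is the reason the piecewise design \eqref{eq: scaling-matrix-entries} is the one it is, engineered so that the leaf-purging transformation stitches together without loss of regularity across quadrant boundaries.
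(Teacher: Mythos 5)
Your route through the explicit closed form is a genuine alternative to the paper's chain-rule proof and, in one respect, more careful about where the delicate issue lies. The paper differentiates $\tilde\rho_{i^\star}$ by chain rule, declares $\nabla \tilde a_{i^\star}^{\ell}(\hat q)=0$ and hence $\nabla Z(\hat q)=\tilde{\mathbf A}_{i^\star}^{-1}$, and then asserts $\nabla\tilde\rho_{i^\star}$ is continuous ``since its compositions are all continuous.'' This sidesteps the fact that $\tilde{\mathbf A}_{i^\star}^{-1}$ itself \emph{jumps} across the axes $\hat q^{\intercal}e_\ell=0$, so $Z(\hat q)$ is Lipschitz but not $C^1$, and the continuity sentence does not follow from the compositions alone. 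Your substitution of the closed form of $\rho_{\rm sc}$ to produce the rational--radical expression in $X=\|\tilde{\mathbf A}_{i^\star}^{-1}\hat q\|^2$ and $Y=\hat q_1^2\hat q_2^2/[\tilde a_{i^\star}^1\tilde a_{i^\star}^2]^2$, together with the observation that each sign-dependent $\tilde a_{i^\star}^{\ell}$ appears only multiplied by $\hat q_\ell^2$, isolates the cancellation that actually carries the regularity; the paper's argument leaves it implicit. (Minor: the radicand condition should read $X^2\geq 4\varkappa^2 Y$; your AM--GM step already gives $X^2\geq 4Y$, and strict positivity follows from $\varkappa<1$.)

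However, the Taylor-matching claim in your final paragraph is too strong, and this is the genuine gap. Near $\hat q_1=0$ write the one-sided branches of the offending term as $g_\pm(\hat q_1)=\hat q_1^2/(a\pm c)^2$ with $a=a_{i^\star}^1$ and $c=[q_{i^\star}-p_i]^{\intercal}e_1$. Both branches have value $0$ and first derivative $0$ at the crossing, which is precisely the $C^1$ match you are exploiting; but the second derivatives are $2/(a+c)^2$ and $2/(a-c)^2$, which differ whenever $c\neq 0$ --- and $c\neq 0$ is exactly the case that motivates the piecewise construction in~\eqref{eq: scaling-matrix-entries} (if $p_i=q_{i^\star}$ the matrix is constant and there is nothing to prove). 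So $X$ and $Y$, and hence $\tilde\rho_{i^\star}$, are $C^1$ but generically \emph{not} $C^2$ across the axes; the piecewise design does not ``stitch together without loss of regularity.'' The defensible conclusion is $C^1$ regularity, which is what the paper's proof explicitly states (``at least $C^1$ smooth'') and all that is invoked downstream in Lemma~\ref{lemma:point-world-nf}, where only the existence of the Jacobian $J_{f_{n,i}}$ is needed. Note the paper's closing remark that ``higher-order smoothness can be proven by the same procedure'' suffers from the same over-reach as your last step; both should be weakened to $C^1$.
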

\begin{proof}
The derivative of~$\Tilde{\rho}_{i^{\star}}(q)$ is given by the chain rule:
$$
\nabla\Tilde{\rho}_{i^{\star}}(q)=\frac{\|q-p_i\|^2\mathbf{I}-(q-p_i)(q-p_i)^\intercal}{\|q-p_i\|^3}\nabla\Tilde{\rho}_{i^{\star}}(\hat{q}),
$$
where the derivative of~$\Tilde{\rho}_{i^{\star}}(\hat{q})$
boils down to computing the derivatives of~$\rho_{sc}(q)$ and~$Z(\hat{q})=\Tilde{\mathbf{A}}_{i^{\star}}^{-1}\hat{q}$ in~\eqref{eq:sc_length_ray}.
The smoothness of~$\rho_{sc}(q)$ and its derivative can be obtained directly from~\eqref{eq:squircle}.
Since $\nabla a_{i^{\star}}^{\ell}(\hat{q})=\nabla(a_{i^{\star}}^{\ell} + \mathbf{sign}(\hat{q}^\intercal e_{\ell}) [q_{i^{\star}} - p_i]^\intercal e_{\ell})=0$, for $\ell=1,2$,
the derivative of~$Z(\hat{q})$ is calculated and simplified as:
$$
\nabla Z(\hat{q})=\Tilde{\mathbf{A}}_{i^{\star}}^{-1}\mathbf{I} -
\Tilde{\mathbf{A}}_{i^{\star}}^{-1}[\nabla\Tilde{\mathbf{A}}_{i^{\star}}]\Tilde{\mathbf{A}}_{i^{\star}}^{-1}\hat{q}
= \Tilde{\mathbf{A}}_{i^{\star}}^{-1}.
$$
Besides, as~$\|q-p_i\|>0$ holds for every
point {$q \in \mathcal{W}_0 \backslash \mathcal{O}_i\bigcup \mathcal{O}_{i^{\star}}$},
the derivative of~$\Tilde{\rho}_{i^{\star}}(q)$ exists and is well-defined.
On the other hand, $\nabla\Tilde{\rho}_{i^{\star}}(q)$ is continuous since its compositions are all continuous.
Therefore,~$\Tilde{\rho}_{i^{\star}}(q)$ is at least~$C^1$ smooth.
Additionally, higher-order smoothness can be proven by the same procedure, which is omitted here.
\end{proof}

Given the length of rays~$\Tilde{\rho}_{i^{\star}}(q)$,
the star deforming factor for each leaf obstacle~$\mathcal{O}_i$ is calculated by:
\begin{equation}\label{eq:deforming-factor}
  v_i(q) = \Tilde{\rho}_{i^{\star}}(q)\,\frac{1+\beta_i(q)\Tilde{\kappa}_i(q)}{\|q-p_i\|},
\end{equation}
{where~$\Tilde{\kappa}_i(q)$ is defined by:
$
\Tilde{\kappa}_i(q) \triangleq
\beta_{i^{\star}}(q)+(\beta_i(q)-2E_i)+\sqrt{\beta^2_{i^{\star}}(q)+((\beta_i(q)-2E_i))^2}.
$
The parameter $E_i > 0$ is a geometric constant satisfying that
$\mathcal{O}_i(2E_i) \bigcap \mathcal{O}_j(2E_j)=\emptyset$
and $\gamma_G^{-1}([0, 2E_G]) \bigcap \mathcal{O}_i(2E_i)=\emptyset$
with $E_G > 0$ being a geometric constant
and $\mathcal{O}_i(x)\triangleq \{q \in \mathbb{R}^2|\beta_i(q)\leq x\}$ with~$x>0$,
for $i,\,j \in \mathcal{I}$ with $i \neq j\,$ and $i,j \neq j^{\star}$,
where $\mathcal{O}_{i^\star}$ and $\mathcal{O}_{j^\star}$ are the parent obstacles
of $\mathcal{O}_{i}$ and $\mathcal{O}_{j}$ in the tree, respectively.
Furthermore,
the translated scaling map~$T_i$ for each obstacle~$\mathcal{O}_i$ with the common center~$p_i$ is defined by:
\begin{equation}\label{eq:purging-scaling}
  T_i(q)\triangleq v_i(q)\,(q-p_i)+p_i.
\end{equation}
Therefore, the purging transformation~$f_{n,i}(q)$ for the~$i$-th leaf obstacle~$\mathcal{O}_i$ in the~$n$-th tree is defined by:
\begin{equation}\label{eq:purging}
  f_{n,i}(q) \triangleq \big(1-\sigma_i(q,\xi_i)\big)\,{\rm id}(q) + \sigma_i(q, \xi_i)\, T_i(q),
\end{equation}
where~$\sigma_i(q,\xi_i)\triangleq \frac{\gamma_G(q)\overline{\beta}_i(q)}{\gamma_G(q)\overline{\beta}_i(q)+ \xi_i\beta_i(q)}$ is the analytic switch;
$\xi_i>0$ is a positive parameter;
$\gamma_G(q)$ is the same as in~\eqref{eq:tf-s-m};
and~$\overline{\beta}_i(q)$ is the omitted product defined by:
$$
\overline{\beta}_i(q) \triangleq \Big(\prod_{j \in \mathcal{I}\backslash \{i,i^{\star}\}}\beta_j(q)\Big)
\Big(\prod_{j \in \mathcal{L}\backslash \{i\} } \beta_j(q)\Big)\Tilde{\beta}_i(q),
$$
where~$\Tilde{\beta}_i(q)$ is defined by:
$
\Tilde{\beta}_i(q) \triangleq
\beta_{i^{\star}}(q)+(2E_i-\beta_i(q))+\sqrt{\beta^2_{i^{\star}}(q)+(2E_i-\beta_i(q))^2}
$.
The positive geometric constant~$E_i$ satisfies the same condition as previously described.

\begin{remark}\label{remark:compare-purging}
The original purging method proposed in~\cite{rimon1990exact} deals with
overlapping obstacles via Boolean combinations
and applies only to planar and parabolic obstacles.
The work in~\cite{li2018navigation} proposes a simpler method
for computing the ray scaling transformations using varying ray lengths.
However, these approaches purge all the leaves \emph{at once},
which is not applicable to the dynamic scenarios where the obstacles are added to the workspace one by one.
Thus, a novel method is proposed in~\eqref{eq:purging} that purges the leafs \emph{one by one}.
\hfill $\blacksquare$
\end{remark}
\begin{lemma}\label{lemma:point-world-nf}
  { Given a workspace $\mathcal{W}$ containing $N$ tree-of-stars with different depth $d_n\geq 0$,
  for $n=1, \cdots N$,
  the complete harmonic potential function~$\varphi_\texttt{NF}(q)$ in~\eqref{eq:complete-nf} is a
  valid navigation function for the workspace~$\mathcal{W}$,
  if the parameters~$K > N$ and $\mu\geq 1$ hold in~\eqref{eq:harmonic-point-potential};
  $\lambda > \Lambda$ for a positive number $\Lambda >0$;
  and $\xi_i > \Xi_i$ for some positive numbers $\Xi_i > 0$, for $i = 0, \cdots, d_n$.}
\end{lemma}
\begin{proof}
Similar statements have been proven
in~\cite{rousseas2021harmonic, filippidis2011adjustable, loizou2022mobile},
thus detailed proofs are omitted here and refer the readers to e.g.,
Theorem~1 of~\cite{loizou2022mobile}, Theorem 2 in~\cite{li2018navigation} and
Proposition~5 of~\cite{loizou2017navigation}.
{Briefly speaking, it is shown that the transformation $\Phi_{\mathcal{S}\rightarrow \mathcal{M}}$
$\Phi_{\mathcal{F}\rightarrow \mathcal{S}}$ are diffeomorphic as long as $\lambda > \Lambda$
and $\xi_i > \Xi_i$ for some sufficient large numbers $\Lambda$ and $\Xi_i$,
and the final potential function $\varphi_{\texttt{NF}}(q)$ has
a unique global minimum at the goal~$q_{\texttt{G}}$ and more importantly,
a set of isolated saddle points of measure zero if~$K > N$ and $\mu\geq 1$ hold.}
Since the purging process proposed in~\eqref{eq:purging} is novel,
this proof mainly shows that~$f_{n,i}(q)$ still satisfies the diffeomorphic conditions:
(i) the Jacobian of~$f_{n,i}(q)$ is nonsingular;
(ii) $f_{n,i}(q)$ is a bijection on the boundary.
To begin with, since~$\Tilde{\rho}_{i^{\star}}(q)$ is proven to be smooth in Lemma~\ref{lemma:smooth_sc_length_ray},
the Jacobian of~$f_{n,i}$ exists and is given by:
$$
\begin{aligned}
J_{f_{n,i}} = &\,\sigma_i(q-p_i)\nabla v_i^{\intercal}
+ (v_i - 1)(q-p_i) \nabla \sigma_i^{\intercal}\\
&+ (1-\sigma_i)\mathbf{I} + \sigma_i v_i \mathbf{I}.
\end{aligned}
$$
Similar to Lemmas~7 and~8 in~\cite{li2018navigation},
it can be shown that there exists a positive constant~$\Xi_i$,
such that if~$\xi_i > \Xi_i$,~$J_{f_{n,i}}$ is nonsingular within the set near~$\mathcal{O}_i$,
denoted by $\mathcal{O}_i(\epsilon_i)=\{q \in \mathbb{R}^2|\beta_i \leq \epsilon_i\}$,
and the set away from~$\mathcal{O}_i$,
denoted by $\mathcal{A}(\epsilon_i)=\{q \in \mathbb{R}^2|\beta_i \geq \epsilon_i\}$, for any~$\epsilon_i>0$.
In addition, it remains to show the injectivity and surjectivity of~$f_{n,i}(q)$.
On the boundary of~$\mathcal{O}_i$,
it holds that~$f_{n,i}(q)|_{q \in \partial\mathcal{O}_i}=\frac{\Tilde{\rho}_{i^{\star}}(q)}{\|q-p_i\|}(q-p_i) + p_i$.
Assume that there exists two points~$q,\,q^{\prime}\in\partial\mathcal{O}_i$ such that
$f_{n,i}(q)=f_{n,i}(q')$, it can be derived that
$q-p_i$ and~$q^{\prime}-p_i$ are on the same ray,
yielding to the contradiction that~$q,\,q^{\prime}$ are the same point.
Thus,~$f_{n,i}(q)$ is injective on~$\partial \mathcal{O}_i$.
On the other hand, since it has been shown that
the Jacobian of~$f_{n,i}(q)$ is nonsingular if~$\xi_i > \Xi_i$,
$f_{n,i}(q)$ is a local homeomorphism according to the Inverse Function Theorem.
Since a local homeomorphism from a compact space into a connected one is surjective,
it follows that~$f_{n,i}(q)$ is a bijection on the boundary.
\end{proof}

The derived potential fields in~\eqref{eq:complete-nf} can be used to drive
holonomic robots from any initial point to the given goal point~$q_G$,
e.g., by following the negated
gradient in~\cite{rousseas2021harmonic, rousseas2022trajectory, filippidis2011adjustable, loizou2022mobile}.
However, the final orientation of the robot at the destination \emph{can not} be controlled,
yielding it impractical for the oriented path~$\mathbf{P}_{\widehat{g}_\ell \rightarrow \widehat{g}_{\ell+1}}$ in~\eqref{eq:path}.
Thus, an additional two-step rotation is proposed for the potential fields above.

\textbf{Two-step Rotation to Harmonic Potentials}:
As shown in Fig.~\ref{fig:vector-traj},
the main idea is to design a two-step rotation transformation to the harmonic potentials,
such that a ``dipole-like'' field is formed near the goal point with the desired orientation.
More specifically, the transformation is given by:
\begin{equation}\label{eq:two-step-transformation}
  \Gamma(q) \triangleq \mathbf{R}\big(\theta_2(q)\big)\, \mathbf{R}\big(\theta_1(q)\big),
\end{equation}
where~$\mathbf{R}(\theta)$ is the standard rotation matrix, i.e.,
$\mathbf{R}(\theta)=[\cos\theta$,
$-\sin\theta; \sin\theta, \cos\theta]$ for~$\theta \in [0, -\pi)$;
~$\theta_1(q)$ is the angle to be rotated
such that the direction of the transformed potential is aligned
with the direction from~$q_G$ to~$q$ near the goal, i.e.,
\begin{equation}\label{eq:theta1}
\theta_1(q) \triangleq s_d(q)\delta_\theta(q)  +
\mathbf{sgn}\big(\delta_\theta(q)\big)[1-s_d(q)]\delta_c,
\end{equation}
where~$\delta_\theta(q) \triangleq \theta_{q-q_G}-\theta_{\nabla\varphi_\texttt{NF}(q)}$ is
the relative angle between $q-q_G$ and $\nabla\varphi_\texttt{NF}(q)$;
$\delta_c \triangleq 2\pi$;
$s_d(q) \triangleq \exp{(\tau - \frac{\tau {\mu^2}}{(\mu - \varphi_\texttt{NF}(q))^2})}$ is a smooth switch function
that is ``on'' near the destination and ``off'' near the obstacle,
with $\tau\in (0,1)$ being a design parameter and~$\mu$ being the maximum value
of $\varphi_\texttt{NF}(q)$.
Moreover, the variable~$\theta_2(q)$ is the angle to be rotated,
such that the transitional potential fields mimic a point-dipole field, i.e.,
\begin{equation}\label{eq:theta2}
\theta_2(\check{q}) \triangleq s_d(\check{q}) \delta^{\prime}_{\theta}(\check{q}) +
\mathbf{sgn}\big(\delta^{\prime}_{\theta}(\check{q})\big)[1-s_d(\check{q})]\delta^{\prime}_c + \delta^{\prime}_c,
\end{equation}
where~$\check{q}$ is the transformed coordinate of~$q$ in the new coordinate system,
where the goal point~$q_G$ is the origin and the desired orientation~$\theta_G$
is the positive direction of the $x$-axis, i.e.,
$\check{q} \triangleq \mathbf{R}^{-1}(\theta_G) (q - q_G)$,
where~$\mathbf{R}(\theta_G)$ is the rotation matrix associated
with the goal angle~$\theta_G$;
$\delta^{\prime}_{\theta}(\check{q}) \triangleq \theta_{\check{q}}$ is the angle
of the vector~$\check{q}$;
$\delta^{\prime}_c \triangleq \pi$;
$s_d(\cdot)$ is the same as in~\eqref{eq:theta1}.

\begin{definition}\label{def:orientated}
The \emph{oriented} harmonic potential fields are defined as the fields obtained
by transforming the original potential fields through a two-step
rotation in~\eqref{eq:two-step-transformation}, denoted by:
\begin{equation}\label{eq:orientated-harmonic-potential}
 \Upsilon(q) \triangleq -\Gamma(q) \nabla\varphi_\texttt{NF}(q),
\end{equation}
where~$\nabla\varphi_\texttt{NF}(q)$ is the gradient of the original potentials. \hfill $\blacksquare$
\end{definition}

\begin{figure}[t]
  \centering
  \includegraphics[width=0.9\hsize]{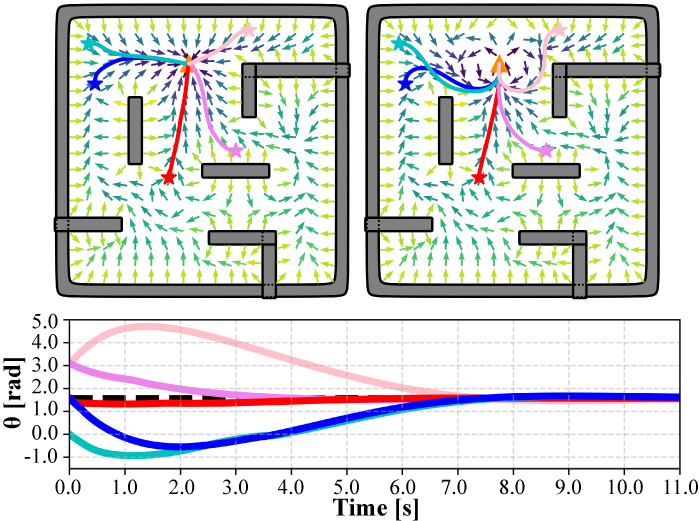}
  \vspace{-0.1in}
  \caption{{Robot trajectories from different initial poses (filled stars)
  to the same goal pose (orange triangle),
  under the original potential fields $-\nabla \varphi_{\texttt{NF}}(q)$
  (\textbf{Top Left})
  and the \emph{oriented} potential fields $\Upsilon(q)$ (\textbf{Top Right})
  via the proposed control law in~\eqref{eq:control}.
  \textbf{Bottom}: The robot orientation~$\theta$ converges to
  the desired value (dashed line), along these trajectories.}}\label{fig:vector-traj}
  \vspace{-0.05in}
\end{figure}

Given the oriented harmonic potential fields,
the integral curves of~$\Upsilon(q)$ correspond to the
trajectories~$\bar{\xi}(t)$ of the autonomous system described by ordinary differential equations:
$\frac{d}{d\,t}\bar{\xi}(t) = \Upsilon(\bar{\xi}(t)) \in \mathbb{R}^{2}$,
with initial value~$\bar{\xi}(t_0)=q_0$.
The convergence property of the integral curves is proven as follows.

\begin{lemma}\label{lemma:transformation-properties}
  {All integral curves of~$\Upsilon(q)$ in~\eqref{eq:orientated-harmonic-potential} converge to $q_G$
  with the desired orientation $\theta_G$ asymptotically, with no collision with any obstacles,
  as long as the conditions in Lemma~\ref{lemma:point-world-nf} hold.}
\end{lemma}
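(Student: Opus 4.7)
The plan is to verify in turn the three assertions of the lemma---safety, asymptotic convergence to $q_G$, and arrival with orientation $\theta_G$---using the key observation that $\Gamma(q)$, being a product of two planar rotations, is itself a rotation matrix; in particular $\|\Upsilon(q)\| = \|\nabla\varphi_{\texttt{NF}}(q)\|$ and the equilibria of~\eqref{eq:differential-equation} coincide with the critical points of $\varphi_{\texttt{NF}}$, which by Lemma~\ref{lemma:point-world-nf} are the unique minimum $q_G$ together with a measure-zero set of saddles. For \textbf{safety}, I would examine $s_d(q)$ on obstacle boundaries: since $\varphi_{\texttt{NF}}\to\mu$ as $q\to\partial\mathcal{O}_i$, the exponent in $s_d$ diverges to $-\infty$, so $s_d\to 0$ there. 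Substituting $s_d=0$ into~\eqref{eq:theta1} and~\eqref{eq:theta2} forces $\theta_1$ and $\theta_2$ to be integer multiples of $2\pi$, hence $\Gamma\to I$ and $\Upsilon|_{\partial\mathcal{O}_i} = -\nabla\varphi_{\texttt{NF}}|_{\partial\mathcal{O}_i}$, which points into free space by the navigation-function property, so $\mathcal{W}$ is forward-invariant.

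For \textbf{convergence}, I would adopt $\varphi_{\texttt{NF}}$ as a Lyapunov candidate. Writing $\theta(q) \triangleq \theta_1(q) + \theta_2(q)$ for the net rotation, one obtains $\dot\varphi_{\texttt{NF}} = \nabla\varphi_{\texttt{NF}}^\intercal \Upsilon = -\|\nabla\varphi_{\texttt{NF}}\|^2 \cos\theta(q)$. In the outer region where $s_d$ is small, $\theta(q)$ lies within $O(s_d)$ of a multiple of $2\pi$, so $\cos\theta > 0$ and $\dot\varphi_{\texttt{NF}} < 0$ away from the critical set; trajectories therefore descend $\varphi_{\texttt{NF}}$ monotonically and enter the inner region $\Omega_G = \{q : s_d(q)\approx 1\}$ around $q_G$ in finite time, modulo the measure-zero saddle set.

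For \textbf{orientation}, inside $\Omega_G$ I would insert $s_d\approx 1$ into~\eqref{eq:theta1}, giving $\theta_1\approx\delta_\theta$, which aligns $\mathbf{R}(\theta_1)\nabla\varphi_{\texttt{NF}}$ with the radial direction $q-q_G$. Transforming into the local frame $\check q = \mathbf{R}^{-1}(\theta_G)(q-q_G)$ of~\eqref{eq:coordination_transformation} and applying $\mathbf{R}(\theta_2)$ with $\theta_2\approx\theta_{\check q}+\pi$ yields a field whose direction in $\check q$-coordinates equals $2\theta_{\check q}$---precisely the signature of a planar electric dipole at the origin with axis along $\check x$. Since every integral curve of such a dipole field reaches the origin tangent to its axis, pulling back through $\mathbf{R}(\theta_G)$ gives convergence to $q_G$ tangent to $\theta_G$.

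The principal obstacle is the intermediate regime $s_d(q)\in(0,1)$, where $\cos\theta(q)$ may change sign and $\varphi_{\texttt{NF}}$ is no longer a strict Lyapunov function. I would close this gap either by constructing a hybrid Lyapunov function that interpolates between $\varphi_{\texttt{NF}}$ in the outer region and a dipole-adapted proxy (e.g.\ $\|\check q\|^2$) in $\Omega_G$, or by invoking LaSalle's invariance principle on the compact set $\mathcal{W}$: the identity $\|\Upsilon\|=\|\nabla\varphi_{\texttt{NF}}\|$ forces the $\omega$-limit set of any bounded trajectory to sit inside the critical set of $\varphi_{\texttt{NF}}$, which generically reduces to $\{q_G\}$, and the local dipole analysis then prescribes the arrival angle.
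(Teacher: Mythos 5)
Your safety and terminal-orientation arguments track the paper's proof closely, and you even correct what is evidently a misprint in the paper: on $\partial\mathcal{W}$ the exponent in $s_d$ diverges to $-\infty$, so $s_d\to 0$ (not $1$), and $\theta_1,\theta_2$ become $2\pi$-multiples, which is why $\Gamma\to\mathbf{I}$ on the boundary. The genuine divergence is in the global-convergence step. The paper never attempts a Lyapunov descent: it notes that $\Gamma(q)$, being everywhere a rotation, is nonsingular, so the equilibria of $\Upsilon$ coincide with the critical points of $\varphi_{\texttt{NF}}$ and the unique-minimum/measure-zero-saddles structure is inherited; since every trajectory in the compact set $\overline{\mathcal{W}}$ has a nonempty $\omega$-limit set, it then appeals to an external result (Lemma~4 of the reference it cites) to exclude periodic $\omega$-limit sets, leaving $q_G$ as the only attractor.

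You, by contrast, try to use $\varphi_{\texttt{NF}}$ as a Lyapunov function, and you correctly observe that this fails in the intermediate band where $s_d\in(0,1)$ and $\cos(\theta_1+\theta_2)$ can change sign. The repair you sketch, however, does not close that hole: the identity $\|\Upsilon\|=\|\nabla\varphi_{\texttt{NF}}\|$ only pins down where the equilibria are; it does not exclude a periodic $\omega$-limit set (on which $\Upsilon$ never vanishes), and LaSalle's invariance principle still requires $\dot V\le 0$ along the whole trajectory, which is precisely what you just conceded can fail. To finish the convergence claim you need either the paper's nonsingularity-plus-external-lemma route, or an explicit Poincar\'e--Bendixson-style argument that rules out closed orbits in the annular intermediate region.
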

\begin{proof}\label{proof:transformation-properties}
{To begin with, it is proven in Lemma~\ref{lemma:point-world-nf} that~$\varphi_\texttt{NF}(q)$ is a valid navigation function,
it holds that~$\varphi_\texttt{NF}(q)=\mu$ on the boundary of the
workspace~$\partial\mathcal{W}$.
Thus, the switch functions $s_d(q) = 1$ hold both in~\eqref{eq:theta1} and~\eqref{eq:theta2},
and~$\theta_1(q)=\theta_2(q)=\pi$ holds.
In this case, the rotation matrices $\mathbf{R}(\theta_1) = \mathbf{R}(\theta_2)$ are both identity matrices,
thus~$\Gamma(q)=\mathbf{I}$ is an identical transformation
and the transformed field coincides with the original potential field, i.e.,~$-\nabla\varphi_\texttt{NF}(q)$.
Furthermore, since the negated gradient points away from the boundary~$\partial\mathcal{W}$,
the transformed field inherits the property of collision avoidance.}

Secondly, the transformation~$\Gamma(q)$ is nonsingular
as both $\mathbf{R}(\theta_1)$ and $\mathbf{R}(\theta_2)$ are non-singular.
Thus, all critical points of~$\varphi_\texttt{NF}(q)$,
including the global minimum~$q_G$ and saddle points,
retain their properties after the transformation.
In other words, the saddle points of~$\varphi_\texttt{NF}(q)$ remain to be non-degenerate
with an attraction region of measure zero.
Since any integral curve in a closed compact set~$\overline{\mathcal{W}}$ is bounded,
there must be a limit set inside~$\overline{\mathcal{W}}$ to which the integral curves converge.
Lemma~$4$ of~\cite{valbuena2012hybrid} states that no such limit cycles exist
if there are no additional attractors other than~$q_G$.
Therefore, the goal point~$q_G$ is the only attractive component of the
limit set within~$\overline{\mathcal{W}}$, i.e.,
the integral curves of~$\Upsilon(q)$ converge to~$q_G$.

Lastly, it remains to be shown that the asymptotic convergence to~$q_G$ is achieved with the desired
orientation~$\theta_G$.
{Clearly, when~$q$ approaches $q_G$, the switch function~$s_d(q,\tau)$ approaches~$1$,
while $\theta_1(q), \theta_2(q)$ approach ${\delta}_{\theta}(q), \delta^{\prime}_{\theta}(q) + \pi$, respectively.
For brevity, let $\theta_1(q) = {\delta}_{\theta}(q)$ and~$\theta_2(q) = \delta^{\prime}_{\theta}(q) + \pi$}.
Therefore, the transformed potential field~$\Upsilon(q)$ is simplified as:
\begin{equation*}\label{eq:near-destination-transformation}
  \Upsilon(q) =
  \begin{bmatrix}
     \|\nabla\varphi_\texttt{NF}(q)\|\cos\big(\delta^{\prime}_{\theta}(q)+\theta_{q-q_G}\big)\\
     \|\nabla\varphi_\texttt{NF}(q)\|\sin\big(\delta^{\prime}_{\theta}(q)+\theta_{q-q_G}\big)
  \end{bmatrix},
\end{equation*}
where~$\delta^{\prime}_{\theta}(q)\in(-\pi,\pi]$ is defined in~\eqref{eq:theta2}.
Then, the inner product of~$\Upsilon(q)$ and~$q_G-q$ is given by:
$\langle \Upsilon(q),\ q_G-q \rangle =
  -\|\nabla\varphi_\texttt{NF}(q)\| \|q_G-q\| \cos\big(\delta^{\prime}_{\theta}(q)\big)$.
  It should be noted that the integral curves can only converge to the~$q_G$
  in the direction of steepest descent,
i.e., from the region where~$\delta^{\prime}_{\theta}(q)$ approaches~$\pi$.
Since~$\delta^{\prime}_{\theta}(q) = \theta_{\check{q}}$
and~$\check{q} = \mathbf{R}^{-1}(\theta_G) (q - q_G)$,
$\delta^{\prime}_{\theta}(q)$ approaches~$\pi$ implies that~$\theta_{q_G-q}$ approaches~$\theta_G$.
Thus, the integral curves converge to~$q_G$ along the desired orientation~$\theta_G$.
\end{proof}

\begin{figure}[t]
  \centering
  \includegraphics[width=0.85\hsize]{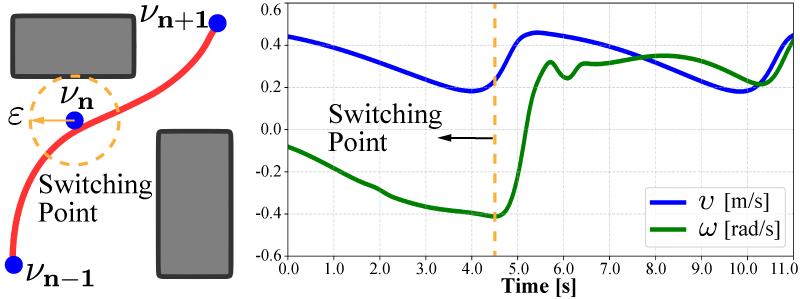}
  \vspace{-0.1in}
  \caption{{
    Illustration of the smooth transition strategy in~\eqref{eq:switch-control} when
    the agent switches from one intermediate waypoint to another.
    The switch function~$\eta_{\epsilon}$ is
    activated once the current waypoint is reached with a margin~$\epsilon$.
    The resulting trajectory and control inputs are all smooth.}
    }
    \label{fig:smooth-control}
  \vspace{-0.05in}
\end{figure}

\subsubsection{Smooth Harmonic-Tracking Controller}
\label{subsubsec:robot-control}
Given the rotated potentials in~\eqref{eq:orientated-harmonic-potential},
{a nonlinear feedback controller can be used for non-holonomic
robots in~\eqref{eq:unicycle} to track its rotated and negated gradient.}
More specifically,
the linear and angular velocity is set as follows:
{\begin{subequations}\label{eq:control}
  \begin{align}
  \upsilon(q,\,q_G) &= k_\upsilon\, \tanh\left(\|q - q_G\|\right);\label{Za}\\
  \omega(\theta,\,\theta_G) & = - k_\omega \left(\theta - \theta_{\Upsilon}\right) +
\upsilon \, [\nabla \theta_{\Upsilon}]^{\intercal} \, J_{\Upsilon} \, \Theta, \label{Zb}
  \end{align}
\end{subequations}}
{where~$q,\,\theta$ are the robot pose and orientation;
$k_\upsilon,\, k_\omega>0$ are controller gains;
$\theta_{\Upsilon}$ is the direction of vector~$\Upsilon \triangleq [\Upsilon_x,\, \Upsilon_y]^{\intercal}$;
$\nabla \theta_{\Upsilon} = [\frac{\partial\, \theta_{\Upsilon}}{\partial\, \Upsilon_x},\,
   \frac{\partial\, \theta_{\Upsilon}}{\partial\, \Upsilon_y}]^{\intercal}$;
$J_{\Upsilon}$ being the~$2 \times 2$ Jacobian matrix of~$\Upsilon$,
whose entries are~$[J_{\Upsilon}]_{ij} = \frac{\partial \Upsilon_i}{\partial q_j}$, for~$i,j\in \{1,2\}$;
and~$\Theta \triangleq [\cos\theta, \, \sin\theta]^{\intercal}$.}
The main idea is to decompose the control objective into two sub-objectives:
(i) the robot orientation is regulated sufficiently fast to~$\theta_{\Upsilon}$ via~\eqref{Zb};
(ii) the robot velocity along this direction is adjusted to reach~$q_G$ via~\eqref{Za}.
The control gains~$k_\upsilon,k_\omega$ should be chosen
  according to the desired safety margin and minimum distance among the obstacles.

{\begin{proposition}\label{proposition:control}
  Under the control law~\eqref{eq:control},
  the robot in~\eqref{eq:unicycle} converges to the goal~$q_G$
  with orientation~$\theta_G$ asymptotically,
  from almost all initial poses in the workspace~$\mathcal{W}$,
  as long as the conditions in Lemma~\ref{lemma:point-world-nf} hold.
\end{proposition}
\begin{proof}
(Sketch) As discussed, the robot system~\eqref{eq:unicycle}
under control law~\eqref{eq:control} can be decomposed into two subsystems
via singular perturbation based on~\cite{khalil2002nonlinear}.
The fast subsystem is equivalent to
$\omega =  - k_\omega \left(\theta - \theta_{\Upsilon}\right) + \dot{\theta}_{\Upsilon}$,
which ensures a global exponential convergence of~$\theta$ to the rotated fields~$\theta_{\Upsilon}$.
Secondly, the slow subsystem is given by
$\dot{q}= \upsilon \frac{\Upsilon}{\|\Upsilon\|}$.
The robot trajectory corresponds one integral curve of the vector field~$\Upsilon(q)$.
As already proven in Lemma~\ref{lemma:point-world-nf} that the integral curves of the vector fields~$\Upsilon(q)$ converge to~$q_G$ with orientation~$\theta_G$,
the slow subsystem converges to the goal pose with the desired orientation.
\end{proof}}

\begin{figure}[t]
  \centering
  \includegraphics[width=0.98\hsize]{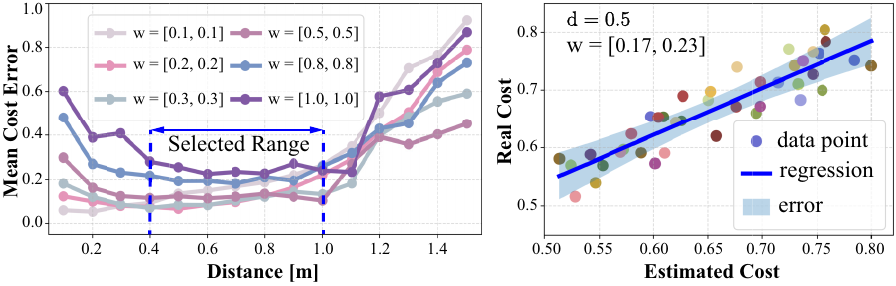}
  \vspace{-0.1in}
  \caption{{
      \textbf{Left:} The difference between the estimated cost by~\eqref{eq:nf-cost}
      and the measured cost under different weights~$\mathbf{w}$,
  for different choice of the distance $d(q_n,\,q_{n+1})$.
  \textbf{Right:} The difference actual with~$40$ samples,
 when~$\mathbf{w}=[0.17,\,0.23]^{\intercal}$ and $d=0.5$.}
  }\label{fig:control-data}
  \vspace{-0.05in}
\end{figure}

  More importantly, since the robot needs to switch along a sequence of waypoints in its
  path~$\mathbf{P}_{\widehat{g}_\ell \rightarrow \widehat{g}_{\ell+1}}$ from~\eqref{eq:path},
  a smooth transition between the intermediate waypoints is crucial
  to ensure both smooth trajectories and control inputs.
  As shown in Fig.~\ref{fig:smooth-control},
  the switching strategy from~$\nu_n = (q_{n}, \theta_{n})$
  to~$\nu_{n+1} = (q_{n+1}, \theta_{n+1})$
  in~$\mathbf{P}_{\widehat{g}_\ell \rightarrow \widehat{g}_{\ell+1}}$ is given by:
  \begin{equation}\label{eq:switch-control}
  u = u(q,\,q_{n}) + \eta_{\varepsilon}\left(\gamma_n(q)\right)\,
 \big(u(q,\,q_{n+1}) - u(q,\,q_{n})\big),
\end{equation}
where~$u$ stands for inputs~$\upsilon$ and~$\omega$;
$\gamma_n(q)\triangleq \|q-q_n\|$; and
$\eta_{\varepsilon}(x) \triangleq \frac{1}{2} \big(1 + \tanh(k_s \, (\varepsilon - \gamma_n(q))) \big)$ is the smooth switch function,
with~$k_s > 0$ being a positive gain,
and~$\varepsilon>0$ the switching margin around~$q_{n}$,
which can be tuned according to how dense the obstacles are located
to ensure safety during transition.
Namely, a smaller~$\varepsilon$ should be chosen when obstacles
are close to the switching point.
Last but not least,
the control cost under the proposed control law in~\eqref{eq:control}
to drive the robot from waypoint~$\nu_n$ to $\nu_{n+1}$ is estimated by:
\begin{equation}\label{eq:nf-cost}
  \begin{aligned}
  &\gamma(\nu_n,\,\nu_{n+1}) \triangleq
    d(q_n,\,q_{n+1}) + \mathbf{w} \, {Q}_{\Upsilon}^{\intercal}(\theta_n,\theta_{n+1}),
 \end{aligned}
\end{equation}
where the first part~$d(q_n,\,q_{n+1})=\|q_{n}-q_{n+1}\|$ measures the straight-line distance;
the second part approximates the rotation cost:
(i) $\mathbf{w}\triangleq [w_1,w_2]$ are the weighting parameters with~$w_1,w_2$ $>0$;
(ii) ${Q}_{\Upsilon}(\theta_n, \theta_{n+1})\triangleq [|\theta_{\Upsilon(q_n)} - \theta_n|,\, |\theta_{q_{n+1}-q_{n}} - \theta_{n+1}|]$,
{where the first item estimates the rotation cost
between robot orientation and the rotated gradient at the starting pose;
the second item estimates the steering cost
between the vector~$q_{n+1} - q_n$ and goal orientation.}
Consequently, it can be used to compute the edge cost of the HT~$\mathcal{T}_{\widehat{g}_\ell\rightarrow \widehat{g}_{\ell+1}}$
in~\eqref{eq:tree}, yielding a more accurate estimate of the navigation cost from $\widehat{g}_\ell$ to $\widehat{g}_{\ell+1}$.
In turn, the actual cost within the navigation map $\mathcal{G}$ is given by:
  $d(\widehat{g},\, \widehat{g}')=\sum^{N-1}_{n=0}\gamma(\nu_n,\,\nu_{n+1})$,
where~$\mathbf{P}_{\widehat{g}\rightarrow \widehat{g}'}=\nu_0\nu_1\cdots \nu_N$
is the shortest path from~$\widehat{g}$ to~$\widehat{g}'$
in the associated HT $\mathcal{T}_{\widehat{g}\rightarrow \widehat{g}'}$.
{\begin{remark}\label{remark:compare-purging}
The control cost in~\eqref{eq:nf-cost}
is estimated through a weighted combination of the translation cost
and the steering cost.
As shown in Fig.~\ref{fig:control-data},
this estimation approaches the actual measured cost
when the distance between consecutive waypoints are selected properly.
\hfill $\blacksquare$
\end{remark}}

\subsubsection{Hybrid Execution of the Initial Plan}
\label{subsubsec:execute}
\begin{figure}[t]
  \centering
  \includegraphics[width=0.98\hsize]{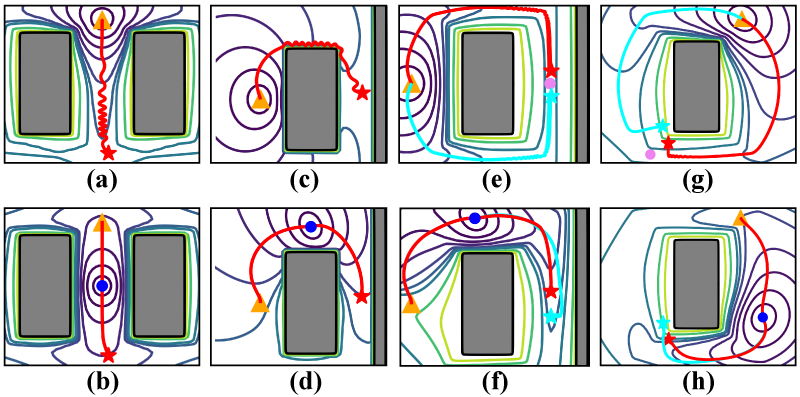}
  \vspace{-0.1in}
  \caption{
    Illustration of how intermediate waypoints can reduce oscillations and long detours,
    under different initial poses (solid stars) and goal poses (orange triangles).
    (i) Oscillations appear as the robot traverses narrow passage
    in \textbf{(a)} or near the obstacle boundary in \textbf{(c)}.
   Intermediate waypoints (blue dots) are introduced in \textbf{(b)} and \textbf{(d)},
   yielding smooth trajectories.
  (ii) Divergent trajectories appear for two close-by initial poses (red and green stars)
  due to their proximity to the saddle point (violet dot) in \textbf{(e)} and \textbf{(g)}.
  Intermediate waypoints (blue dots) are introduced in \textbf{(f)} and \textbf{(h)},
  yielding nearly identical trajectories.
  }\label{fig:saddle-valley}
  \vspace{-0.05in}
\end{figure}
The initial plan $\widehat{\mathbf{g}}=\widehat{g}_1\widehat{g}_2\cdots\widehat{g}_L(\widehat{g}_{L+1}\widehat{g}_{L+2}\cdots\widehat{g}_{L+H})^\omega$
in \eqref{eq:plan} can be executed via the following hybrid strategy:
(i) staring from~$\ell=1$, determine the next goal region~$\widehat{g}_{\ell+1}$;
(ii) construct the HT as $\mathcal{T}_{\widehat{g}_{\ell} \rightarrow \widehat{g}_{\ell+1}}$
and determine the shortest path
$\mathbf{P}_{\widehat{g}_{\ell} \rightarrow \widehat{g}_{\ell+1}}=\nu_0\cdots\nu_{N-1}\nu_G$ in~\eqref{eq:path};
  (iii) starting from the first vertex~$n=0$,
once the robot enters the small vicinity of vertex~$\nu_n$,
it switches to the next mode of traversing the subsequent
intermediate edge $(\nu_n,\,\nu_{n+1})$.
The associated controller~\eqref{eq:control} is activated with the goal
pose~$\nu_{n+1}$ and potential~$\Upsilon_{\nu_n \rightarrow \nu_{n+1}}(q)$.
The controller remains active until the robot enters the small vicinity of~$\nu_{n+1}$.
This execution repeats until the vertex $\nu_G$, i.e., $\widehat{g}_{\ell+1}$, is reached,
after which the index~$\ell$ is incremented by $1$ and returns to step (i).
This procedure could be repeated until the last state of plan suffix~$\hat{s}_{L+H}$ is reached,
if the environment remains static.
However, since the environment is only partially known,
more obstacles would be detected during execution and added to the environment model.
which might invalidate not only the current navigation map but also the underlying harmonic potentials.

\begin{remark}\label{remark:compare-NF}
  {
    The design of intermediate waypoints between subtasks can alleviate certain
    limitations of the classic navigation functions from~\cite{koditschek1987exact,rimon1990exact}
    for robotic navigation.
    As shown in Fig.~\ref{fig:saddle-valley},
    when the robotic system is geometrically and dynamically constrained,
    it can \emph{not} follow the negated gradient perfectly.
    Consequently, oscillations appear along narrow passages and close
    to boundary of obstacles,
    where the underlying gradient changes directions abruptly.
    As also emphasized in~\cite{rimon1990exact,loizou2022mobile},
    the phenomenon of ``vanishing" valleys and the resulting oscillations
    often occur when the parameter~$\lambda$ is set too large.
    Moreover, when two initial poses are located close to saddle points,
    the resulting trajectories can be diverging in opposite directions.
    To overcome these issues,
    the introduced intermediate waypoints can decompose the
    long path into shorter segments
    of which the associated potentials are often easier to track.}
  \hfill $\blacksquare$
  \end{remark}




\subsection{Online Adaptation}\label{subsec:online}

As the robot detects more obstacles during execution,
the estimated obstacles, the HTs and the underlying harmonic potentials are all updated as follows.

\subsubsection{Adaptive Obstacle Estimation}
\label{subsubsec:online-squircle}
\begin{figure}[t]
  \centering
  \includegraphics[width=0.98\hsize]{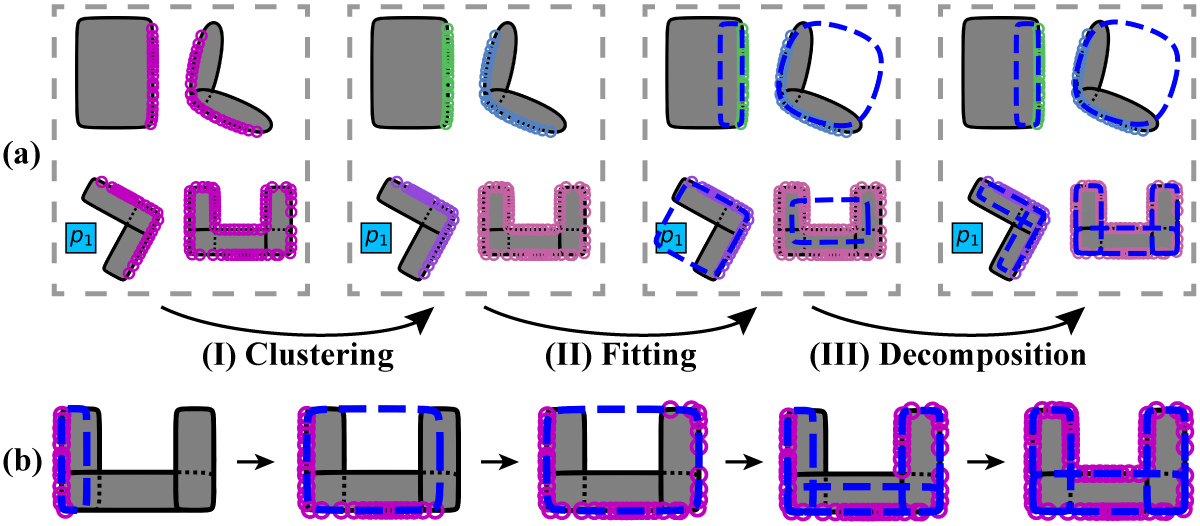}
  \vspace{-0.1in}
  \caption{
  \textbf{(a)} Estimation of the obstacles
      (in blue dashed lines) in Sec.~\ref{subsubsec:online-squircle},
      which includes clustering, fitting and decomposition;
      \textbf{(b)} Online adjustment
      as more measurements (pink circles) are gathered.
      The cluster is ultimately decomposed into three segments, which are fitted to three squircles
      when the error exceeds the desired threshold for fitting a single squircle.
  }\label{fig:adapt_squircle_esti}
  \vspace{-0.05in}
\end{figure}
  As discussed in Sec.~\ref{subsubsec:init-nf} and shown in Fig.~\ref{fig:squircle_esti},
  the estimation of squircle obstacles can be inaccurate and uncertain
  when the measurements are noisy and partial.
  Instead of simply relying on Assumption~\ref{assump:accurate},
  the obstacle estimation can be adapted online
  as more measurements are gathered, as shown in Fig.~\ref{fig:adapt_squircle_esti}.
  In particular, this process involves three steps that are executed at a desired frequency:
  (i) clustering the measurements to identify potential obstacle regions;
  (ii) fitting the geometric models to these clusters for improved accuracy; and
  (iii) decomposing the fitted models to refine the obstacle representation.

  More specifically, \textbf{Clustering}: Given the set of measurements $\mathbf{D}_{t}$ at the current time step $t$,
   the data points are clustered into $K$ clusters $\mathbf{D}_{t} = \{\mathbf{D}_{t,1}, \cdots, \mathbf{D}_{t,K}\}$
   by their relative distances and curvature.
  \textbf{Fitting}: The data points in each cluster $\mathbf{D}_{t,k}$
  are then fitted to a squircle $\hat{\mathcal{O}}_{t,k}$
  using nonlinear least squares methods~\cite{gavin2019levenberg}, $\forall k =1, \cdots, K$.
  If the fitting error $E_k$ falls below a pre-defined threshold,
  the estimated model $\hat{\mathcal{O}}_{t,k}$ is accepted,
  by which either a new squircle is added to the forest world
  or the geometric parameters of an existing squircle is updated.
  Since the fitting process is nonlinear,
  multiple estimated models with comparable fitting errors may exist.
  In such cases, the model that is most \emph{conservative} is selected.
  For instance, if only one side of a rectangular squircle is detected,
  a minimum width or height is assumed for the missing side,
  as shown in Fig~\ref{fig:adapt_squircle_esti}.
  \textbf{Decomposition}: If the fitting error $E_k$ is larger than the threshold,
  it indicates that this cluster $\mathbf{D}_{t,k}$ might not correspond to a single squircle
  and should be decomposed into multiple segments $\mathbf{D}_{t,k}=\{\mathbf{S}_{t,1}, \cdots, \mathbf{S}_{t,L}\}$,
  each of which is then fitted to a squircle model $\hat{\mathcal{O}}_{t,\ell}$, $\forall \ell=1, \cdots, L$.
  Namely, the curvature of each point $p_j \in \mathbf{D}_{t,k}$ is captured locally based on its neighboring points.
  Then, the cluster $\mathbf{D}_{t,k}$ is divided into several segments
  based on the changes in curvatures between consecutive points.
  Moreover, if the estimated squircle overlaps with the robot or the regions of interest,
  the cluster should also be further decomposed to avoid these regions.
  This process is repeated until all segments are fitted to squircles
  with the accepted accuracy.

More technical details can be found in the supplementary
  files in~\cite{wang2023harmonic}.
  Some illustrations are shown in Fig.~\ref{fig:adapt_squircle_esti},
  with more numerical examples provided in Sec.~\ref{sec:experiments}.
  Via this adaptive scheme, both the obstacle parameters
  and the structure of the forest world can be updated online.

\subsubsection{Incremental Update of Harmonic Potentials}
\label{subsubsec:online-nf}

The classic methods as proposed in~\cite{loizou2022mobile, li2018navigation, rimon1990exact}
construct the underlying navigation function at once by taking into
account \emph{all} obstacles in the  workspace.
However, this means that the whole process has to be repeated from scratch
each time an additional obstacle is added.
A more intuitive approach would be to incrementally update different parts of
the computation process by storing and re-using some of the intermediate results.
Consider the following \emph{two} cases as shown in Fig.~\ref{fig:nf-update}:
(i) independent stars are added to existing world of stars;
and (ii) overlapping stars are added to existing forests of stars.

\textbf{Independent Stars}:
To begin with, consider the simple case that a set of non-overlapping star obstacles,
denoted by~$\mathcal{O}_k$, $k=1,\cdots,M$,
is added to an empty workspace~$\mathcal{W}=\mathcal{O}_0$ in sequence.
As discussed in Sec.~\ref{subsubsec:init-nf}, the star-to-sphere transformation
is constructed via the translated scaling map
in~\eqref{eq:translated-scaling-map}.

\begin{figure}[t]
  \centering
  \includegraphics[width=0.8\hsize]{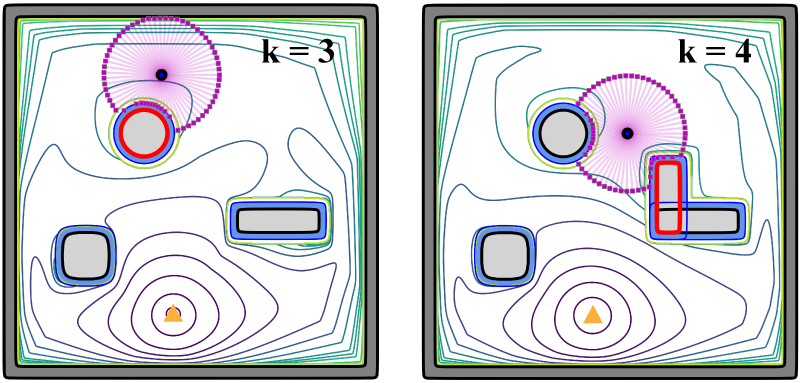}
  \vspace{-0.1in}
  \caption{Iterative update of the harmonic potentials
      as new obstacles (red lines) are added:
      an \emph{independent} obstacle  for $k=3$ (\textbf{Left});
      and an \emph{overlapping} obstacle for $k=4$ (\textbf{Right}).
  }\label{fig:nf-update}
  \vspace{-0.05in}
\end{figure}

\begin{definition}\label{def:online-omitted-product}
  {The online omitted product $\overline{\beta}_{i}^{k}(q)$
of star-to-sphere transformation for obstacle~$\mathcal{O}_i$
after adding the $k$-th obstacle,
for~$i=0,1,\cdots,k$ and for $k=1,\cdots,M$,
is a real-valued function defined for the star world $\mathcal{S}$ as:
$\overline{\beta}_{i}^{k}(q) \triangleq \prod_{j=0,j\ne i}^k \beta_j(q)$,
where $\beta_j(q)$ is the obstacle function defined in~\eqref{eq:squircle}.
 \hfill $\blacksquare$}
\end{definition}
\begin{definition}\label{def:online-analytic-switch}
{The online analytic switch $s_i^{k}(q)$ of star-to-sphere transformation for obstacle $\mathcal{O}_i$,
after adding the $k$-th obstacle, for $i=0,1,\cdots, k$ and for $k=1,\cdots, M$, is a real-valued function defined for a star world~$\mathcal{S}$ as:
$$s_i^{k}(q) \triangleq \frac{\gamma_G(q)\overline{\beta}_{i}^{k}(q)}
{\lambda_{k}\beta_{i}(q)+\gamma_G(q)\overline{\beta}_{i}^{k}(q)},$$
where~$\gamma_G(q)$ is the distance-to-goal function;
$\lambda_{k}$ is a positive parameter associated with~$k$;
$\overline{\beta}_{i}^{k}(q)$ is the online omitted product;
and $\beta_i(q)$ is the obstacle function defined in~\eqref{eq:squircle}.
 \hfill $\blacksquare$}
\end{definition}
Then, the transformation from the star world to the sphere world after
adding the $k$-th star is restated as:
\begin{equation}\label{eq:online-tf-s-m}
  \Phi_{\mathcal{S}\rightarrow \mathcal{M}}^{k}(q) \triangleq {\rm id}(q) + \sum_{i=0}^k s_i^{k}(q)\, \big(v_i(q)-1\big)\,(q-q_i),
\end{equation}
where~$s_i^{k}(q)$ is the online analytic switch associated with both~$i$ and~$k$;
$v_i(q)$ is the scaling factor defined in~\eqref{eq:scaling};
and~$q_i$ is the geometric center of the obstacle~$\mathcal{O}_i$,
for~$i=0,1,\cdots, k$.
Moreover, the following intermediate variables are defined:
\begin{equation}\label{eq:restate-scaling-map}
  \mathbf{F}_i^{k}(q) \triangleq s_i^{k}(q)\,\mathbf{r}_i(q),
\end{equation}
where~$\mathbf{r}_i \triangleq \big(v_i(q)-1\big)(q-q_i)$.
Consequently, the transformation in~\eqref{eq:online-tf-s-m} is adjusted to:
$\Phi_{\mathcal{S}\rightarrow \mathcal{M}}^{k}(q) = {\rm id}(q) + \sum_{i=0}^k
\mathbf{F}_i^{k}(q)$.
Note that for~$k=0$, the function~$\mathbf{F}_0^{0}(q)$ for the empty workspace is computed using the initial transformation via~\eqref{eq:tf-s-m}.
Afterwards, each time a new~$(k+1)$-th obstacle is added,
$\mathbf{F}_i^{k}(q)$ should be updated to~$\mathbf{F}_i^{k+1}(q)$, $\forall i=0,1,\cdots, k$,
and~$\mathbf{F}_{k+1}^{k+1}(q)$ for the added obstacle should also be constructed.
To compute these variables efficiently, a recursive method is proposed here.
For the ease of notation,
define the multi-variate auxiliary function useful for the recursive computation as follows:
{\begin{equation}\label{eq:general-iterative-function}
  \Psi_\texttt{a}(\mathbf{F},\, \mathbf{r},\, \mathbf{\Tilde{r}},\, \alpha) \triangleq \frac{\|\mathbf{F}\|}
      {\alpha \, \|\mathbf{r}\| +(1-\alpha)\,\|\mathbf{F}\|} \,\mathbf{\Tilde{r}},
\end{equation}
  where $\mathbf{F}, \mathbf{r}, \mathbf{\Tilde{r}}$ are vectors;
  $\alpha\in (0,\,1)$ is a scalar variable;
and $\alpha \, \|\mathbf{r}\| +(1-\alpha)\,\|\mathbf{F}\| \neq 0$.}
Then, the function~$\mathbf{F}_{0}^{k+1}(q)$ associated with the workspace is updated from~$\mathbf{F}_{0}^{k}(q)$ as follows:
\begin{equation}\label{eq:workspace-update}
 \mathbf{F}_{0}^{k+1}(q) = \Psi_\texttt{a}\left(\mathbf{F}_0^{k}(q),\,\mathbf{r}_0(q),\,\mathbf{r}_0(q),\,\alpha^{k}(q)\right),
\end{equation}
where~$\mathbf{r}_0(q)$ is defined in~\eqref{eq:restate-scaling-map} for the workspace;
the variable $\alpha^{k}(q) \triangleq \lambda_{k+1}/\big(\lambda_{k}\, \beta_{k+1}(q)\big)$;
$\lambda_{k}$ and~$\lambda_{k+1}$ are positive parameters defined in~\eqref{def:online-analytic-switch},
for $k=0,1,\cdots,M$.
\begin{lemma}\label{lemma:update-workspace}
  {Each time an independent obstacle $\mathcal{O}_{k+1}$
    is added to the workspace, the following holds:}

{(i) the online omitted product for $\mathcal{O}_0$
is given by $\overline{\beta}_{0}^{k+1}(q) = \overline{\beta}_{0}^{k}(q) \, \beta_{k+1}(q)$;}

{(ii) the online analytic switch for $\mathcal{O}_0$
  is given by
  $s_0^{k+1}(q) = \frac{s_0^{k}(q)}{\alpha^{k}(q)\, (1-s_0^{k}(q)) + s_0^{k}(q)}$,
  where $\alpha^{k}(q)=\lambda_{k+1}/\big(\lambda_{k}\, \beta_{k+1}(q)\big)$.
}
\end{lemma}
\begin{proof}
  {(Omitted) Available in the supplementary files.}
\end{proof}
\begin{proposition}\label{proposition:update-workspace}
  {Each time an independent obstacle $\mathcal{O}_{k+1}$ is added to the workspace,
  the recursive calculation of~$\mathbf{F}_{0}^{k+1}(q)$
  in~\eqref{eq:workspace-update} for $\mathcal{O}_0$
  is valid for iteration~$k=0,1,\cdots, M-1$.}
\end{proposition}
\begin{proof}
{(Sketch) The proof is done by induction.
Namely, the initial value of~$\mathbf{F}_{0}^{k}(q)$ is given by
$\mathbf{F}_{0}^{0}(q) = s_0^{0}(q)\,\mathbf{r}_0(q)$ with $s_0^{0}(q)=\frac{\gamma_G(q)}{\lambda_{0}\beta_{0}(q)+\gamma_G(q)}$,
which satisfies~\eqref{eq:restate-scaling-map}.
Now assume that $\mathbf{F}_{0}^{\ell}(q)=s_0^{\ell}(q)\,\mathbf{r}_0(q)$
and $s_0^{\ell}(q)=\frac{\gamma_G(q)\overline{\beta}_0^{\ell}(q)}{\lambda_{\ell}\beta_0(q) + \gamma_G(q)\overline{\beta}_0^{\ell}(q)}$ hold
for an integer $\ell\geq0$.
Then, $\mathbf{F}_{0}^{\ell+1}(q)$ is derived via~\eqref{eq:workspace-update} as:
\begin{equation*}
\mathbf{F}_{0}^{\ell+1}(q)
=\frac{\|s_0^{\ell}(q)\,\mathbf{r}_0(q)\| \,\mathbf{r}_0(q)}{\alpha^{\ell}(q) \, \|\mathbf{r}_0(q)\|+\big(1-\alpha^{\ell}(q)\big)\,\|s_0^{\ell}(q)\,\mathbf{r}_0(q)\|}.
\end{equation*}
Given~\eqref{eq:squircle}, it holds that
$\beta_0(q) \geq 0$ and $\overline{\beta}_0^{\ell}(q) \geq 0$,
which implies that $s_0^{\ell}(q) \in [0,\,1]$.
Therefore, $\mathbf{F}_{0}^{\ell+1}(q)$ is simplified as
$\mathbf{F}_{0}^{\ell+1}(q)=\frac{s_0^{\ell}(q) \,\mathbf{r}_0(q)}{\alpha^{\ell}(q)+(1-\alpha^{\ell}(q))\,s_0^{\ell}(q)}$
Further, Lemma~\ref{lemma:update-workspace} implies
that $s_0^{\ell}(q) = \frac{\alpha^{\ell}(q)\,s_0^{\ell+1}(q)}{(\alpha^{\ell}(q)-1)\,s_0^{\ell+1}(q) + 1}$.
Combined with $\mathbf{F}_{0}^{\ell+1}(q)$,
it yields that
$\mathbf{F}_{0}^{\ell+1}(q)=s_0^{\ell+1}(q) \, \mathbf{r}_0(q),$
which is consistent with~\eqref{eq:restate-scaling-map},
thus concluding the proof.}
\end{proof}

Furthermore, the function~$\mathbf{F}_{i+1}^{k+1}(q)$ for each
inner obstacle~$i=0,1,\cdots,k$ is calculated from~$\mathbf{F}_{0}^{k+1}(q)$ iteratively by:
\begin{equation}\label{eq:obstacle-update}
 \mathbf{F}_{i+1}^{k+1}(q) = \Psi_{\texttt{a}}\left(\mathbf{F}_{i}^{k+1}(q),\,\mathbf{r}_{i}(q),\,\mathbf{r}_{i+1}(q),\,\alpha_{i}(q)\right),
\end{equation}
where~$\mathbf{r}_{i}(q)$ and~$\mathbf{r}_{i+1}(q)$ are defined in~\eqref{eq:restate-scaling-map}
for the~$i$-th and~$(i+1)$-th obstacles;
$\alpha_{i}(q)=(\beta_{i+1}(q))^2/(\beta_{i}(q))^2$.
\begin{lemma}\label{lemma:update-obstacle}
  {Each time an independent obstacle $\mathcal{O}_{k+1}$
    is added to the workspace, the following holds:}

{(i)
  the online omitted product for $\mathcal{O}_{i+1}$
  is given by $\overline{\beta}_{i+1}^{k+1}(q) = \overline{\beta}_{i}^{k+1}(q) \, \frac{\beta_{i}(q)}{\beta_{i+1}(q)}$;}

{(ii)
  the online analytic switch for $\mathcal{O}_{i+1}$
is given by
$s_{i+1}^{k+1}(q) = \frac{s_i^{k+1}(q)}{\alpha_{i}(q)\, (1-s_i^{k+1}(q))
  + s_i^{k+1}(q)}$, where $\alpha_{i}(q)=(\beta_{i+1}(q))^2/(\beta_{i}(q))^2$.}
\end{lemma}
\begin{proof}
  {(Omitted) Available in the supplementary files.}
\end{proof}
\begin{proposition}\label{proposition:update-obstacle}
  {Each time an independent obstacle $\mathcal{O}_{k+1}$ is added to the workspace,
  the recursive calculation of~$\mathbf{F}_{i+1}^{k+1}(q)$
  in~\eqref{eq:workspace-update} for $\mathcal{O}_{i+1}$
  is valid for iteration~$i=0,1,\cdots, k$.}
\end{proposition}
\begin{proof}
{(Sketch) Similar to Proposition~\ref{proposition:update-workspace}, the proof is done by induction.
The initial value of~$\mathbf{F}_{i}^{k+1}(q)$ is given by
$\mathbf{F}_{0}^{k+1}(q) = s_0^{k+1}(q)\,\mathbf{r}_0(q)$,
which fulfills~\eqref{eq:restate-scaling-map}.
Now assume that~$\mathbf{F}_{\ell}^{k+1}(q)=s_{\ell}^{k+1}(q)\,\mathbf{r}_{\ell}(q)$ with $s_{\ell}^{k+1}(q)=\frac{\gamma_G(q) \overline{\beta}_{\ell}^{k+1}(q)}{\gamma_G(q)\overline{\beta}_{\ell}^{k+1}(q)+ \lambda_{k+1}\beta_{\ell}(q)}$ holds for an integer~$\ell \geq0$.
Then, $\mathbf{F}_{\ell+1}^{k+1}(q)$ is derived via~\eqref{eq:workspace-update} as:
\begin{equation*}
\mathbf{F}_{\ell+1}^{k+1}(q)
=\frac{\|s_{\ell}^{k+1}(q)\,\mathbf{r}_{\ell}(q)\| \,\mathbf{r}_{\ell+1}(q)}{\alpha_{\ell}(q) \, \|\mathbf{r}_{\ell}(q)\|+\big(1-\alpha_{\ell}(q)\big)\,\|s_{\ell}^{k+1}(q)\,\mathbf{r}_{\ell}(q)\|}.
\end{equation*}
Since $s_{\ell}^{k+1}(q) \in [0,1]$ holds,
it can be simplified to~$\mathbf{F}_{\ell+1}^{k+1}(q)
=\frac{s_{\ell}^{k+1}(q) \,\mathbf{r}_{\ell+1}(q)}{\alpha_{\ell}(q) \, +(1-\alpha_{\ell}(q))\,s_{\ell}^{k+1}(q)}$.
Furthermore, Lemma~\ref{lemma:update-obstacle} implies that
$s_{\ell}^{k+1}(q) = \frac{s_{\ell+1}^{k+1}(q)}{\alpha_{\ell}(q)\, (1-s_{\ell+1}^{k+1}(q)) + s_{\ell+1}^{k+1}(q)}$.
Combined with $\mathbf{F}_{\ell+1}^{k+1}(q)$,
it yields that
$\mathbf{F}_{\ell+1}^{k+1}(q)=s_{\ell+1}^{k+1}(q) \, \mathbf{r}_{\ell+1}(q),$
which is consistent with~\eqref{eq:restate-scaling-map}, thus concluding the proof.}
\end{proof}

\begin{remark}\label{remark:independent-stars}
 Note that~$\mathbf{F}_0^{k+1}(q)$ in~\eqref{eq:workspace-update} is iterated over the index~$k$ with initial value~$\mathbf{F}^{0}_0(q)$
 while~$\mathbf{F}_{i+1}^{k+1}(q)$ in~\eqref{eq:obstacle-update} is iterated over the index~$i$ with initial value~$\mathbf{F}_0^{k+1}(q)$.
The reason is that as the $(k+1)$-th obstacle is added to the existing star world,
$\mathbf{F}_0^{k}$ for the outer boundary should be updated to~$\mathbf{F}_0^{k+1}$ first,
and then the transformation~$\mathbf{F}_{i+1}^{k+1}$ for each inner obstacle~$i=0,1,\cdots, k$
is constructed incrementally based on~$\mathbf{F}_0^{k+1}$.
\hfill $\blacksquare$
\end{remark}

\textbf{Overlapping Stars}: Secondly, it is also possible that the new obstacle is overlapping
with an existing obstacle.
More precisely, assume that a sequence of star obstacles, denoted
by $\{\mathcal{O}_k, k=1,\cdots, N\}$,
is added to a star world and overlapping with one of the existing obstacles as a leaf.
As discussed in Sec.~\ref{subsubsec:init-nf} for the static scenario,
the purging transformation~\eqref{eq:purging-tf} for the leaf obstacles
is also constructed by ray scaling process\eqref{eq:purging-scaling}.
To be specific, the purging transformation for the~$k$-th obstacle in one of the tree is restated as:
\begin{equation}\label{eq:online-purging}
  f_{k}(q) \triangleq {\rm id}(q) + \sigma_k(q)\, \big(v_k(q)-1\big)\,(q-p_k),
\end{equation}
where~$\sigma_k(q)$ is the online analytic switch associated with~$k$;
$v_k(q)$ is the scaling factor defined in~\eqref{eq:purging-scaling};
and~$p_k$ is the common center between the obstacle~$\mathcal{O}_k$ and its parent~$\mathcal{O}_k^{\star}$.
\begin{definition}\label{def:online-omitted-product-purging}
{The online omitted product $\overline{\beta}_{k}(q)$ of leaf-purging transformation
 for obstacle $\mathcal{O}_k$, after adding the $k$-th obstacle for $k=1,\cdots, N$,
 is defined on the forest world $\mathcal{F}$ as:
\begin{equation}\label{eq:online-omitted-product-purging}
\overline{\beta}_{k}(q) \triangleq \Big(\prod_{j=0,j \neq k^{\star}}^{k-1}\beta_j(q)\Big)
\Big(\prod_{j \in \mathcal{L}\backslash \{k\} } \beta_j(q)\Big)\Tilde{\beta}_k(q),
\end{equation}
where~$\Tilde{\beta}_k(q)$ is defined in~\eqref{eq:purging};
$\beta_j(q)$ is the obstacle function defined in~\eqref{eq:squircle};
and $\mathcal{L}$ is the set of indices for all leaves in $\mathcal{F}$.
 \hfill $\blacksquare$}
\end{definition}
\begin{definition}\label{def:online-analytic-switch-purging}
{The online analytic switch $\sigma_{k}(q)$ of leaf-purging transformation for obstacle $\mathcal{O}_k$,
after adding the $k$-th obstacle for $k=1,\cdots, N$,
is defined on the forest world $\mathcal{F}$ as:
\begin{equation}
 \sigma_{k}(q)\triangleq \frac{\gamma_G(q)\overline{\beta}_{k}(q)}{\xi_k\beta_k(q) + \gamma_G(q)\overline{\beta}_{k}(q)},
\end{equation}
where $\gamma_G(q)$ is the distance-to-goal function;
$\xi_k$ is a positive parameter associated with $k$;
$\overline{\beta}_{k}(q)$ is the online omitted product;
and $\beta_k(q)$ is the obstacle function.
 \hfill $\blacksquare$}
\end{definition}
Similar to~\eqref{eq:restate-scaling-map}, the intermediate variables are defined as:
\begin{equation}\label{eq:restate-purging}
  \mathbf{H}_k(q) \triangleq \sigma_k(q)\,\mathbf{r}_k(q),
\end{equation}
where~$\mathbf{r}_k(q) \triangleq \big(v_k(q)-1\big)(q-p_k)$.
Consequently, the transformation from forest world to star world
after adding the~$k$-th star obstacle is given by:
\begin{equation}\label{eq:online-tf-f-s}
  \Phi_{\mathcal{F}\rightarrow \mathcal{S}}^{k}(q) \triangleq \mathbf{H}_1 \circ
  \mathbf{H} \circ \cdots\circ \mathbf{H}_k(q).
\end{equation}
Note that for~$k=1$, the new obstacle forms a new tree of stars
and the transformation function~$\mathbf{H}_1(q)$ is computed via~\eqref{eq:purging}.
Afterwards, each time a new~$(k+1)$-th obstacle is added,
the transformation~\eqref{eq:online-tf-f-s} should be updated as follows:
\begin{equation}\label{eq:online-tf-f-s-update}
  \Phi_{\mathcal{F}\rightarrow \mathcal{S}}^{k+1}(q) = \Phi_{\mathcal{F}\rightarrow \mathcal{S}}^{k} \circ \mathbf{H}_{k+1}(q),
\end{equation}
where~$\mathbf{H}_{k+1}(q)$ is calculated from~$\mathbf{H}_{k}(q)$ iteratively by:
\begin{equation}\label{eq:successive-tree-update}
 \mathbf{H}_{k+1}(q) = \Psi_{\texttt{a}}\left(\mathbf{H}_{k}(q),\,\mathbf{r}_{k}(q),\,\mathbf{r}_{k+1}(q),\,\alpha_{k}(q)\right),
\end{equation}
where~$\Psi_{\texttt{a}}(\cdot)$ is defined in~\eqref{eq:general-iterative-function};
$\mathbf{r}_{k+1}(q)$ and $\mathbf{r}_{k}(q)$ are defined in~\eqref{eq:restate-purging};
and~$\alpha_{k}(q)$ is another intermediate variable, given by:
$$
\alpha_{k}(q)=\zeta_k \,\frac{\beta_{k+1}(q)\, \Tilde{\beta}_{k}(q)\, \beta_{{(k+1)}^{\star}}(q)}
{(\beta_{k}(q))^3\,\Tilde{\beta}_{k+1}(q) \, \beta_{{k}^{\star}}(q)},
$$
where~$\zeta_k=\xi_{k+1}/\xi_{k}$ with~$\xi_{k}$ and~$\xi_{k+1}$ being the design parameters from~\eqref{eq:purging-tf};
$\Tilde{\beta}_{k+1}(q)$ and $\Tilde{\beta}_{k}(q)$ are defined as in~\eqref{eq:purging}.
\begin{lemma}\label{lemma:update-obstacle-purging}
  {Each time an obstacle $\mathcal{O}_{k+1}$ is added to the
    workspa-ce and overlapping with an existing obstacle,
    the following holds:}

  {(i) the online omitted product for $\mathcal{O}_{k+1}$ is given by
 $\overline{\beta}_{k+1}(q) = \overline{\beta}_{k}(q) \, \frac{(\beta_{k}(q))^2\,\Tilde{\beta}_{k+1}(q) \, \beta_{{k}^{\star}}(q)}
{\Tilde{\beta}_{k}(q)\, \beta_{{(k+1)}^{\star}}(q)}$;}

  {(ii) the online analytic switch for $\mathcal{O}_{k+1}$
  is given by:
$\sigma_{k+1}(q) = \frac{\sigma_{k}(q)}{\alpha_{k}(q)(1-\sigma_{k}(q)) + \sigma_{k}(q)}$,
where $\alpha_{k}(q)=\frac{\xi_{k+1}\, \beta_{k+1}(q)\, \Tilde{\beta}_{k}(q)\, \beta_{{(k+1)}^{\star}}(q)}
{\xi_{k}\, (\beta_{k}(q))^3\,\Tilde{\beta}_{k+1}(q) \, \beta_{{k}^{\star}}(q)}$.}
\end{lemma}
\begin{proof}
  {(Omitted) Available in the supplementary files.}
\end{proof}
\begin{proposition}\label{proposition:update-obstacle-purging}
  {Each time an obstacle $\mathcal{O}_{k+1}$ is added to the workspace
  and overlapping with an existing obstacle,
  the recursive calculation of~$H_{k+1}(q)$
  in~\eqref{eq:successive-tree-update} is valid, for each new leaf obstacle~$k=0,1,\cdots, N-1$.}
\end{proposition}
\begin{proof}
  {Similar to Proposition~\ref{proposition:update-obstacle},
    the proof is done by induction.
The initial value of~$\mathbf{H}_{k}(q)$ is given by
$\mathbf{H}_{1}(q) = \sigma_{1}(q)\,\mathbf{r}_1(q)$,
which satisfies~\eqref{eq:restate-purging}.
Assume that~$\mathbf{H}_{\ell}(q)=\sigma_{\ell}(q)\,\mathbf{r}_{\ell}(q)$ with
$\sigma_{\ell}(q)=\frac{\gamma_G(q) \overline{\beta}_{\ell}(q)}{\gamma_G(q)\overline{\beta}_{\ell}(q)+ \xi_{k}\beta_{\ell}(q)}$
holds for an integer~$\ell \geq0$.
Then, $\mathbf{H}_{\ell+1}(q)$ is computed via~\eqref{eq:successive-tree-update} as:
\begin{equation*}
\mathbf{H}_{\ell+1}(q)
=\frac{\|\sigma_{\ell}(q)\,\mathbf{r}_{\ell}(q)\| \,\mathbf{r}_{\ell+1}(q)}{\alpha_{\ell}(q) \, \|\mathbf{r}_{\ell}(q)\|+(1-\alpha_{\ell}(q))\,\|\sigma_{\ell}(q)\,\mathbf{r}_{\ell}(q)\|}.
\end{equation*}
Since $\sigma_{\ell}(q) \in [0,1]$,
it holds that $\mathbf{H}_{\ell+1}(q)
=\frac{\sigma_{\ell}(q) \,\mathbf{r}_{\ell+1}(q)}{\alpha_{\ell}(q) +(1-\alpha_{\ell}(q))\sigma_{\ell}(q)}$.
When combined with $\sigma_{\ell}(q) = \frac{\sigma_{\ell+1}(q)}{\alpha_{\ell}(q)\, (1-\sigma_{\ell+1}(q)) + \sigma_{\ell+1}(q)}$
from Lemma~\ref{lemma:update-obstacle-purging},
$\mathbf{H}_{\ell+1}(q)=\sigma_{\ell+1}(q) \, \mathbf{r}_{\ell+1}(q)$ holds,
which is consistent with~\eqref{eq:restate-purging}, thus concluding the proof.}
\end{proof}

\begin{remark}\label{remark:independent-stars}
{Note that the \emph{same} auxiliary function $\Psi(\cdot)$ in~\eqref{eq:general-iterative-function}
is used for the recursive computation
in \eqref{eq:workspace-update}, \eqref{eq:obstacle-update} and \eqref{eq:successive-tree-update}.}
\hfill $\blacksquare$
\end{remark}

\begin{remark}\label{remark:independent-stars}
{The parameters $\lambda_k$ in Def.~\ref{def:online-analytic-switch} for $k=0,1,\cdots, M$,
and $\xi_k$ in Def.~\ref{def:online-analytic-switch-purging} for $k=1,2,\cdots, N$
should satisfy the conditions in Lemma~\ref{lemma:point-world-nf},
i.e., $\lambda_k > \Lambda_k$ for some positive numbers $\Lambda_k >0$;
and $\xi_k > \Xi_k$ for some positive numbers $\Xi_k > 0$.}
\hfill $\blacksquare$
\end{remark}

\begin{figure*}[t]
  \centering
  \includegraphics[width=0.95\hsize]{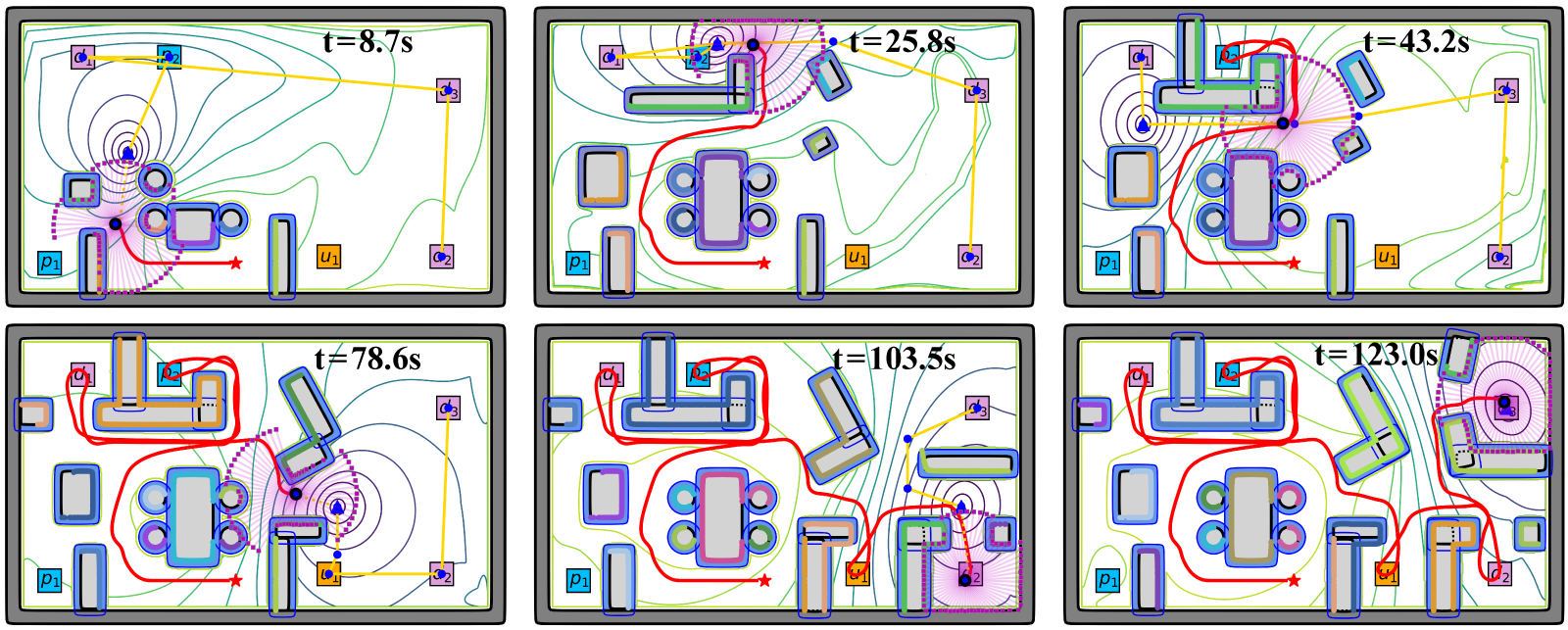}
  \vspace{-0.16in}
  \caption{Snapshots of simulation when the high-level plan and
    the underlying harmonic potentials are updated,
  as the robot explores gradually the workspace.
  The robot trajectory is depicted in red; the local goal of~$\varphi_{\text{NF}}$ are highlighted in blue triangle;
  and the intermediate waypoints in~$\mathcal{T}$ are marked in blue.}\label{fig:simulation}
  \vspace{-0.1in}
\end{figure*}

\textbf{Combination of both cases:}
Overall, the above two cases can be summarized into a more general formula.
Namely, the recursive transformation from forest of stars to sphere world
after adding the~$k$-th obstacle~$\mathcal{O}_k$ is given by:
\begin{equation}\label{eq:online-tf-f-m}
  \Phi_{\mathcal{F}\rightarrow \mathcal{M}}^{k}(q) \triangleq
\Phi_{\mathcal{S}\rightarrow \mathcal{M}}^{k} \circ
\Phi_{\mathcal{F}\rightarrow \mathcal{S}}^{k}(q),
\end{equation}
where if the new obstacle~$\mathcal{O}_{k+1}$ is added to the workspace as an independent star,
$\Phi_{\mathcal{S}\rightarrow \mathcal{M}}^{k}$ is updated to
$\Phi_{\mathcal{S}\rightarrow \mathcal{M}}^{k+1}$ by~\eqref{eq:online-tf-s-m};
on the other hand, if the new obstacle~$\mathcal{O}_{k+1}$ is overlapping with any existing obstacle,
$\Phi_{\mathcal{F}\rightarrow \mathcal{S}}^{k}$ is updated to
$\Phi_{\mathcal{F}\rightarrow \mathcal{S}}^{k+1}$ by~\eqref{eq:online-tf-f-s-update}.
Moreover, each time an independent star~$\mathcal{O}_{M+1}$ is added to the workspace,
the \emph{complete} harmonic potential function defined in point world should be updated by the recursive rule:
$$
 \phi_{\mathcal{P}}^{k+1}(x) = \frac{K}{K+1}\,\phi_{\mathcal{P}}^{k}(x) + \frac{1}{K+1}\big(\phi(x,\, P_d) + \phi(x, P_{M+1})\big),
$$
where~$\phi_{\mathcal{P}}^{k}(x)$ and~$\phi_{\mathcal{P}}^{k+1}(x)$ are the
potential function in the~$k$-th and~$(k+1)$-th step;~$\phi(x, P_{M+1})$
is the harmonic term for the new point obstacle.
Given the updated transformation~$\Phi_{\mathcal{F}\rightarrow \mathcal{M}}^{k+1}$ and the underlying
harmonic potentials in point world~$\phi_{\mathcal{P}}^{k+1}$,
the new navigation function is adapted to~$\varphi_\texttt{NF}^{k+1}$ accordingly.
\subsubsection{Local Revision of Harmonic Trees}
\label{subsubsec:online-tree}
As new obstacles are detected, parts of some Oriented Harmonic Trees
$\{\mathcal{T}_{\widehat{g}_{\ell}\rightarrow \widehat{g}_{\ell+1}}\}$
are revised in \emph{three} main steps:
(I) \emph{Trimming of Harmonic trees}.
For each HT $\mathcal{T}_{\widehat{g}_{\ell}\rightarrow \widehat{g}_{\ell+1}}$,
find the vertices that are within the newly-added obstacle,
i.e.,~$\nu\in \mathcal{O}_k$,
and remove them from the set of vertices~$V$, along with the attached edges;
(II) \emph{Potential-biased regeneration}.
To re-connect the oriented HT to the goal vertex,
new vertices and edges are generated.
Depending on the particular method of discretization,
new vertices could be generated with the bias,
e.g., more towards the area with large change in its potential value;
(III) \emph{Path revision}.
Once new vertices and edges are added,
the associated edge costs should be re-evaluated.
Suppose that at each time of update~$H$ edges have been traversed in total,
of which their actual costs are given by~$\boldsymbol{\gamma}^{\star}=(\gamma_1^{\star},\cdots, \gamma_H^{\star})$.
Their estimated costs are~$\Bar{\mathbf{\gamma}}=(\gamma_1, \cdots, \gamma_H)$
with $\Bar{\mathbf{d}}$ and $\Bar{\mathbf{Q}}$ being the distance and rotation cost from~\eqref{eq:nf-cost}.
Then, the weighting parameters can be updated by:
$\mathbf{w}^{\star} = (\Bar{\mathbf{Q}}^{\intercal}\Bar{\mathbf{Q}})^{-1}\Bar{\mathbf{Q}}^{\intercal}$ $(\boldsymbol{\gamma}^{\star}-\Bar{\mathbf{d}})$,
which is then used to update all edge costs, even for other HTs.
Afterwards, the revised path is found
via the same search algorithm to the goal vertex.

\subsubsection{Asymptotic Adaptation of Task Plan}
\label{subsubsec:online-automaton}
On the highest level,
since the HTs are updated, the associated feasibility and costs in the navigation
map~$\mathcal{G}$ should be updated accordingly,
which in turn leads to a different task plan~$\hat{\mathbf{g}}$.
More specifically,
the navigation map~$\mathcal{G}$ is updated in \emph{two} steps:
(i) the feasibility of navigation from~$\widehat{g}$ to~$\widehat{g}'$ in~$\mathcal{G}$
is re-evaluated based on whether the path~$\mathbf{P}_{\widehat{g}\rightarrow \widehat{g}'}$
exists within the HT $\mathcal{T}_{\widehat{g}\rightarrow \widehat{g}'}$.
If not, the transition $(g,\, g')$ is removed from the edge set~$E$.
This is often caused by newly-discovered obstacles blocking some passages;
(ii) the costs of transitions within~$E$ are re-computed
given the updated path within the new HTs.
Consequently, the navigation map~$\mathcal{G}$ is up-to-date with the latest environment model.
Thus, the task plan~$\widehat{\mathbf{g}}^\star$ is re-synthesized
within the updated product~$\hat{\mathcal{A}}$ by searching for a path
from the current product state to an accepting state.
In other words, as the environment is gradually explored, the suffix of the task plan often would converge to optimal one asymptotically.


\subsection{Complexity Analyses}\label{subsec:analysis}
The computational complexity to construct the product
$\widehat{\mathcal{A}}$ is $\mathcal{O}(|G|^2 |\mathcal{A}_{\varphi}|^2)$,
where $|G|$ is the number of goal regions and $|\mathcal{A}_{\varphi}|$ is the number of states within $\mathcal{A}_{\varphi}$.
The complexity to construct a HT $\mathcal{T}$ is $\mathcal{O}(|\mathcal{T}|)$, where $|\mathcal{T}|$
is the number of intermediate waypoints in the tree structure.
For a forest world with~$|\mathcal{I}|$ total stars and $|\mathcal{F}|$ trees of stars with maximum depth $d$,
the complexity to compute the initial Harmonic potential $\Upsilon$ is $\mathcal{O}(|\mathcal{F}|^2+d|\mathcal{I}|)$.
Thus, the complexity to compute the complete $\mathcal{G}$ is $\mathcal{O}(|G|^2 |\mathcal{A}_{\varphi}|^2+|\mathcal{T}|(|\mathcal{F}|^2+d|\mathcal{I}|))$.
During online adaptation, each time an additional obstacle is added,
the recursive update of $\Upsilon$ has complexity $\mathcal{O}(|\mathcal{F}|+|\mathcal{I}|)$.
The complexity of local revision of~$\mathcal{G}$ and $\mathcal{T}$ are $\mathcal{O}(|\mathcal{G}'| (|\mathcal{F}|+|\mathcal{I}|))$ and $\mathcal{O}(|\mathcal{T}'|)$,
respectively, where $|\mathcal{G}'|$ is the number of revised vertices,
$|\mathcal{T}'|$ is the number of generated intermediate waypoints.



\section{Simulation and Experiments} \label{sec:experiments}
To further validate the effectiveness of our proposed method,
extensive numerical simulations and hardware experiments are conducted,
against several state-of-the-art approaches.
The proposed method is implemented in Python3 and tested on an Intel Core i7-1280P CPU.
More descriptions, accompanied videos,
and source code can be found in the website~\cite{wang2023harmonic}.

\subsection{Description of Robot and Task}\label{subsec:model}
Consider a unicycle robot with the dynamics in~\eqref{eq:unicycle}
that operates within a workspace measuring
$7m \times 4m$, with an initial model depicted in Fig.~\ref{fig:problem}.
The robot occupies a circular area of radius $r_{\texttt{r}}=0.1m$,
and is equipped with a Lidar sensor capable of detecting objects
up to a maximum range of $1.0m$.
The control gains as specified in~\eqref{eq:control}
are set to~$k_\upsilon = 1.0$ and~$k_\omega = 0.8$, such that the tracking accuracy is below $0.1m$.
Both the robot control and the Lidar measurements are updated at $10$Hz.
The design parameter for the smooth switch function in~\eqref{eq:theta1} and~\eqref{eq:theta2} is set to~$0.5$.
The parameter~$K$ in the harmonic potentials
  defined in~\eqref{eq:harmonic-point-potential} is set to $2$ initially.
Each time an independent obstacle is added to the workspace,
$K$ is increased by~$1$.
In addition, the parameter~$\mu$ is set to $1$,
while the geometric parameters~$E_i$ and~$E_G$ are both set to~$0.02$.
The parameters~$\lambda_k$,~$\xi_k$ in
Def.~\ref{def:online-analytic-switch} and~\ref{def:online-analytic-switch-purging}
are set to $5.0\times10^2$ and $1.0\times10^5$,
i.e., sufficiently large to accommodate the complex environment.
The weight parameter~$\mathbf{w}$ utilized in the control cost estimation in~\eqref{eq:nf-cost}
is initially given as~$[0.1,\ 0.1]^{\intercal}$.
The set of vertices~$V$ in~\eqref{eq:tree} is generated by the visibility graph from~\cite{huang2004dynamic}
with cost estimation based on~\eqref{eq:nf-cost} and a safety buffer of $0.15m$.
Fig.~\ref{fig:problem} illustrates the complete environment,
which mimics a complex office setting with numerous overlapping obstacles.
Initially, the robot \emph{only knows} the workspace boundary, the task regions and none of the obstacles.

Regarding the robot task, the workspace consists of a set of regions of interest,
denoted by~$p_1, p_2, d_1, d_2, d_3, u_1$.
The high-level task requires the robot
to transport objects from either the storage room $p_1$ or $p_2$
to the destinations $d_1$, $d_2$, and $d_3$.
Moreover, during runtime, a contingent task might be triggered by an external event
and requested in region $u_1$.
In such cases, the robot must prioritize this task for execution.
This can be expressed as the formula $\varphi = \big(\Diamond \big{(} (p_1 \vee p_2)
\wedge (\Diamond d_1)\big{)}
\wedge \big(\Diamond \big{(} (p_1 \vee p_2) \wedge (\Diamond d_2)\big{)}
\wedge \big(\Diamond \big{(} (p_1 \vee p_2) \wedge (\Diamond d_3)\big{)}
\wedge (\texttt{o}\rightarrow u_5)$.
It is worth noting that there exist numerous high-level plans to fulfill the specified tasks,
and their actual costs rely heavily on the complete workspace,
which can only be explored online.

\subsection{Results}\label{subsec:results}
As visualized in Fig.~\ref{fig:simulation}, the robot starts from the initial position
$(3.2m, 0.4m)$ with the orientation~$\pi$.
The task automaton $\mathcal{A}_{\varphi}$ is constructed with $29$ states and $474$ edges.
The FTS associated with the regions of interest is initialized as a fully-connected graph
that each edge is built as a HT consisting of $4$ intermediate waypoints,
with an average computation time of $0.08s$.
Then, the product automaton $\widehat{\mathcal{A}}$ is constructed with $192$ states and $1435$ edges,
of which the initial task plan is~$P_0 = p_1d_1d_3d_2$.
Guided by this high-level plan, the robot navigates first towards task $p_1$ along the intermediate waypoints.
Between any pair of the waypoint $(\nu_{n},\,\nu_{n+1})$,
the initial harmonic potential~$\varphi_{\texttt{NF}}$ is constructed by~\eqref{eq:complete-nf} with an average
computation time of~$10.46ms$.
Then, the oriented harmonic fields~$\Upsilon$ is constructed with almost no additional time,
given the desired orientation~$\theta_{n+1}$.
The switch function in~\eqref{eq:switch-control} is utilized
with~$k_s=2.0$ and~$\epsilon=0.1$.
Furthermore, the underlying HT and high-level plan are updated at discrete instants during execution, e.g., at $t=8.7s$ and $43.2s$
when new obstacles are detected and estimated, blocking the way to the next task region.
In particular, at $t=8.7s$, the estimated costs for the plans $p_1d_1d_3d_2$
and $p_2d_1d_3d_2$ are $22.91$ and $16.07$, respectively.
Thus, the robot moves toward~$p_2$ instead of~$p_1$ due to the higher cost associated with rotation.
The mean computation time for the new navigation functions at $t=8.7s$ and $43.2s$ are $14.06ms$ and $15.82ms$, respectively.
The weight~$\mathbf{w}$ in the control cost is updated to $[0.67, 0.34]^{\intercal}$ given the traversed edges in HT.
Afterwards, the navigation map~$\mathcal{G}$ is updated in~$0.16s$ given the new visibility graph.
At $t=78.6s$, the urgent task in region $u_1$ is triggered, thus the new subtask is added and
the associated product automaton is reconstructed to get the new task plan $u_1 d_2 d_3$,
as shown in Fig.~\ref{fig:simulation}.
Finally, the whole task is accomplished in~$123s$ and the resulting trajectory
has a total distance~$27.29m$.
It is worth pointing out that the final estimated forest world is \emph{not the
same} as the actual workspace in Fig.~\ref{fig:problem},
e.g., the L-shape obstacle to the middle-left,
and the squircle to the bottom-right are not fully explored.

\begin{figure}[t]
  \centering
  \includegraphics[width=0.99\hsize]{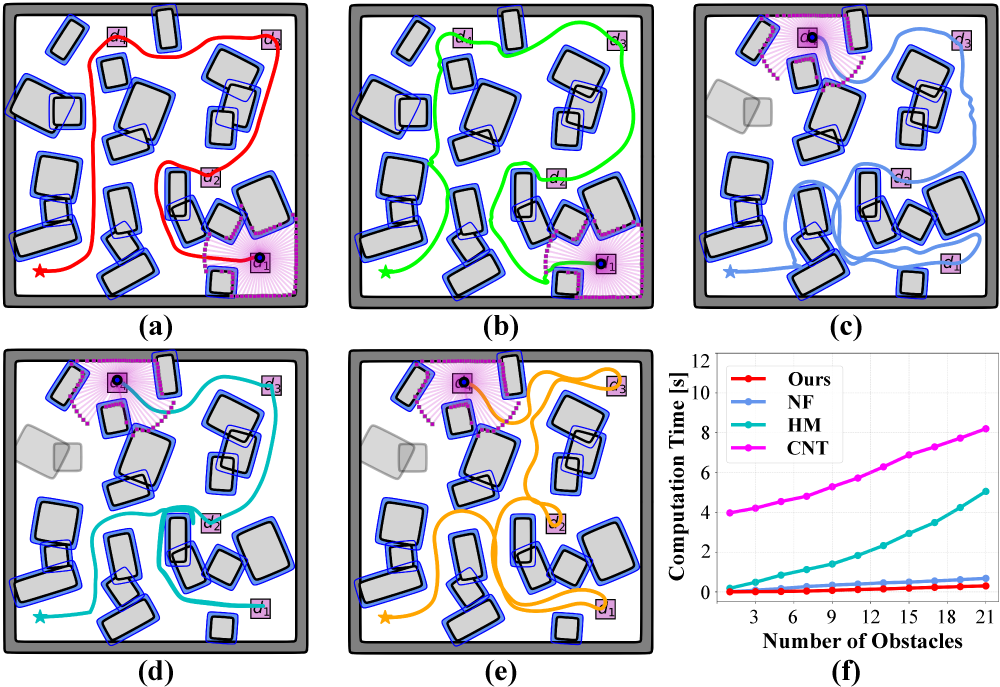}
  \vspace{-0.1in}
  \caption{(\textbf{a}-\textbf{e})
      Final trajectories via the proposed method (in red),
      OHPF without the search trees (in green),
      NF without plan adaptation (in blue),
     HM without plan adaptation (in cyan),
     and RRT$^\star$ without plan adaptation (in orange).
     The obstacles marked in blue are detected online, while those in
     light gray are undetected;
     (\textbf{f}) Comparison of computation time between our
     incremental method (in red) and the baselines.}
    \label{fig:compare}
\end{figure}

\subsection{Comparisons}\label{subsec:compare}
\begin{table}[t]
\begin{center}
  \caption{Comparison with Baselines}\label{table:table-data}
  \vspace{-0.05in}
  \setlength{\tabcolsep}{0.5\tabcolsep}
  \centering
  \begin{tabular}{c||c c c c c}
    \toprule
    \textbf{Method} & \textbf{\makecell{Planning \\ Time [s]}} & \textbf{\makecell{Execution \\ Time [s]}} & \textbf{\makecell{Travel Di- \\ stance [m]}} & \textbf{\makecell{Adap- \\ tation}} & \textbf{\makecell{Osci- \\ llation}}\\
    \midrule
    \textbf{HT(ours)} & {4.28} & \textbf{47.64}  & \textbf{15.32} & \textbf{Yes} & \textbf{No}\\
    OHPF        & 3.19    & 65.80        & 18.75          & \textbf{Yes}          & Yes\\
    NF          & \textbf{0.0}   & 85.23         & 22.16          & No           & Yes\\
    HM       & 13.09   & 54.29   & 18.85   & {No}   & {\textbf{No}}\\
    RRT$^\star$ & 97.68   & 69.81        & 20.50          & No           & \textbf{No}\\
    CNT        & 47.26   & 77.27    & 19.64   & {No}   & {\textbf{No}}\\
    \bottomrule
  \end{tabular}
  \vspace{-0.1in}
\end{center}
\end{table}
To further demonstrate the effectiveness of our proposed Harmonic Tree (HT) structure,
{a quantitative comparison is conducted against \textbf{five} baselines:
(i) the oriented harmonic potential fields
(OHPF) proposed in this work with the high-level task adaptation,
but omitting the second-layer search trees;
(ii) the Navigation Function (NF) from~\cite{rimon1990exact,loizou2017navigation}
without the high-level task adaptation;
(iii) the harmonic maps (HM) in~\cite{vlantis2018robot} via the open-sourced implementation;
(iv) the Non-holonomic RRT$^\star$ in~\cite{lavalle2006planning, park2015feedback}
without the high-level task adaptation;
and (v) the conformal navigation transformation (CNT) from~\cite{fan2022robot},
which has to be modified considerably for the complex workspace here.}
As summarized in Table~\ref{table:table-data},
the metrics to compare are computation time for each plan synthesis and adaptation,
the total cost of final trajectory as the distance travelled,
and whether oscillations appear during execution.

For a more detailed comparison,
an obstacle cluttered work-
space of size~$5.0m \times 5.0m$ with~$20$ overlapping obstacles
is considered as shown in Fig.~\ref{fig:compare}.
It is worth noting that the associated forest world has a maximum depth of~$4$,
which is quite difficult for model-based methods such as NF, OHPF and ours
due to the purging process.
The robot starts from $(0.4m, 0.4m)$ and has a sensing range of~$1.0m$.
The task requires the robot to surveil the regions of interest
$d_1, d_2, d_3, d_4$ in an arbitrary order.
Other parameters are same to Sec.~\ref{subsec:model}.
The final trajectories and numerical results are shown in Fig.~\ref{fig:compare}
and Table~\ref{table:table-data}.
The proposed HT exhibits lowest cost for task completion with a fast planning and adaptation,
with no collision or oscillations during execution.
As for NF, a blind execution of the initial plan leads to a more costly trajectory with numerous oscillations since the gradients near the obstalce can not be tracked perfectly.
Besides, the nominal NF can not control the final orientation as the robot approaches the task areas,
resulting in high steering cost when task regions switch.
In contrast, OHPF optimizes the orientations at each task region and adapts the high-level
plan online as the proposed HT,
leading to a much smaller execution time and overall cost.
However, it is worth noting that without the guidance of search trees,
oscillations can still occur close to obstacles for OHPF.
Further, RRT$^\star$ takes significantly more time to synthesize the initial plan
and adapt the new plan,
due to high computationally complexity of collision detection between samples.
Namely, it takes around $98s$ to compute the complete plan for RRT$^\star$, compared with merely $4.3s$ by the proposed HT.
The HM method has a travel distance of~$18.85m$ with a total planning time~$13.09s$,
  which includes $20$ times of replanning and each replanning takes~$0.65s$.
  Each obstacle is represented by average~$80$ boundary points, which is the minimum
  number that can ensure safety in our tests.
Similarly, the CNT method takes significantly more planning time and execution
time (close to $26s$ and $77s$, with~$20$ times of replanning).

Lastly, the computational efficiency of the proposed iterative approach
is compared with nominal NF, HM, and CNT, as summarized in Fig.~\ref{fig:compare}.
As the number of obstacles increases, the computation time of our method remains relatively low
due to its analytical computation, incremental update and the hierarchical structure.
In contrast, the nominal NF method requires nearly twice the time due to the
frequent re-calculation of the potentials from scratch.
Furthermore, the computation time for CNT increases drastically
with the number of boundary points (each obstacle has $150$ boundary points),
e.g., each replanning takes more than~$8.0s$ for CNT with~$20$ obstacles.

\subsection{Hardware Experiments}\label{subsec:experiments}
The proposed method is deployed to a differential-driven robot of radius $r_{\texttt{r}}=0.2m$.
As shown in Fig.~\ref{fig:hardware}, the workspace is constructed
  with dimensions of $5.2m \times 5.2m$,
  which has $4$ independent and $4$ overlapping obstacles.
The controller gains in~\eqref{eq:control} are set to~$k_\upsilon = 0.1$
  and~$k_\omega = 0.2$, ensuring that the tracking error remains below $0.2m$.
The robot state is estimated using SLAM technology, while the communication
  between the robot and the workstation is achieved by ROS with a frequency of $10$Hz.
The point clouds are collected by a forward-facing $180^\circ$ Lidar within a radius of~$8.0$m around the robot.
Other parameters are similar to the simulation.
The task is designed to transport objects from $p_1$ to $d_1$, $d_2$, and $d_3$.
The robot starts from its initial position at $(0.0m, 0.0m)$.
The initial task plan is derived within $0.09s$ as $p_1d_1d_2d_3$.
The robot adjusts its task plan and
harmonic potentials as $6$ obstacles are detected online.
As the robot detects more obstacles and familiar with the workspace, the task plan is updated as
$p_1 d_2 d_3 d_1$ and finally converges to $p_1d_2d_3d_1$.
The whole task is accomplished in $338s$, of which the complete video can be found in~\cite{wang2023harmonic}
The resulting trajectory and snapshots are shown in Fig.~\ref{fig:hardware} with a total distance~$21.74m$,
which is safe and smooth despite of the localization and control uncertainties.
These results are consistent with the simulations.

\begin{figure}[t]
  \centering
  \includegraphics[width=0.99\hsize]{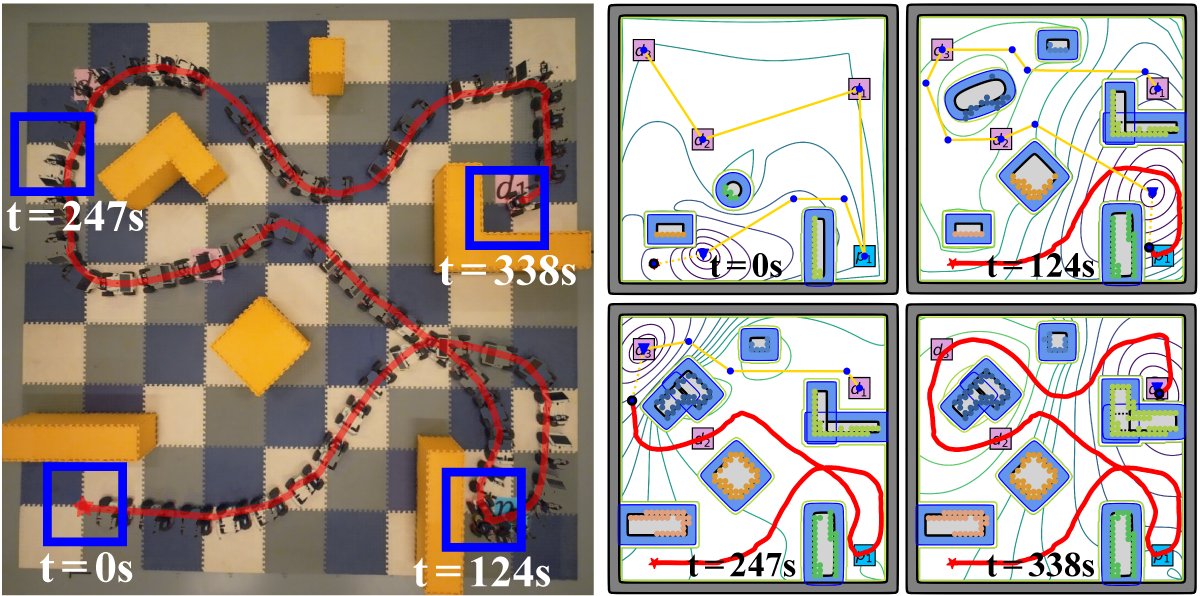}
  \vspace{-0.1in}
  \caption{\textbf{Left:} Recorded execution results;
    \textbf{Right:} Snapshots of robot trajectories and the potential fields,
    where obstacles in blue are detected online.}\label{fig:hardware}
    \vspace{-0.05in}
\end{figure}

\section{Conclusion} \label{sec:conclusion}
This work proposes an automated framework for task and motion planning,
employing harmonic potentials for navigation and oriented search trees for planning.
The design and construction of the search tree is specifically customized for the task automaton
and co-designed with the underlying navigation controllers based on harmonic potentials.
Efficient and secure task execution is ensured for partially-known workspace.
As described earlier, although an online approach is proposed
to adapt the forest world and obstacle estimation during execution,
it lacks a systematic analysis for more general workspaces,
which is part of our ongoing work.
Moreover, the extension to 3D navigation remains our future work.

\bibliography{contents/references}

\end{document}